%% file: neurips_2026.tex
\documentclass{article}

\PassOptionsToPackage{numbers, compress}{natbib}
\usepackage[preprint]{neurips_2026}
\usepackage[utf8]{inputenc} 
\usepackage[T1]{fontenc}    
\usepackage{hyperref}       
\usepackage{url}            
\usepackage{booktabs}       
\usepackage{amsfonts}       
\usepackage{nicefrac}       
\usepackage{microtype}      
\usepackage[table]{xcolor}         
\usepackage{graphicx}
\definecolor{FitColor}{RGB}{149,67,89} 
\usepackage{wrapfig}
\usepackage{amsmath}
\usepackage{multirow}
\definecolor{baselinecolor}{RGB}{255, 255, 255} 
\definecolor{aurora}{RGB}{255, 255, 255}
\definecolor{ourcolor}{RGB}{225, 240, 255}  
\usepackage{subcaption}
\usepackage{stfloats}
\usepackage{amsthm}

\usepackage{algorithmic}
\usepackage{algorithm}
\theoremstyle{plain}
\newtheorem{theorem}{Theorem}[section]
\newtheorem{proposition}[theorem]{Proposition}
\newtheorem{lemma}[theorem]{Lemma}

\theoremstyle{definition}

\theoremstyle{remark}

\title{
  \makebox[\linewidth][c]{%
    \raisebox{-0.8em}{\includegraphics[height=2.3em]{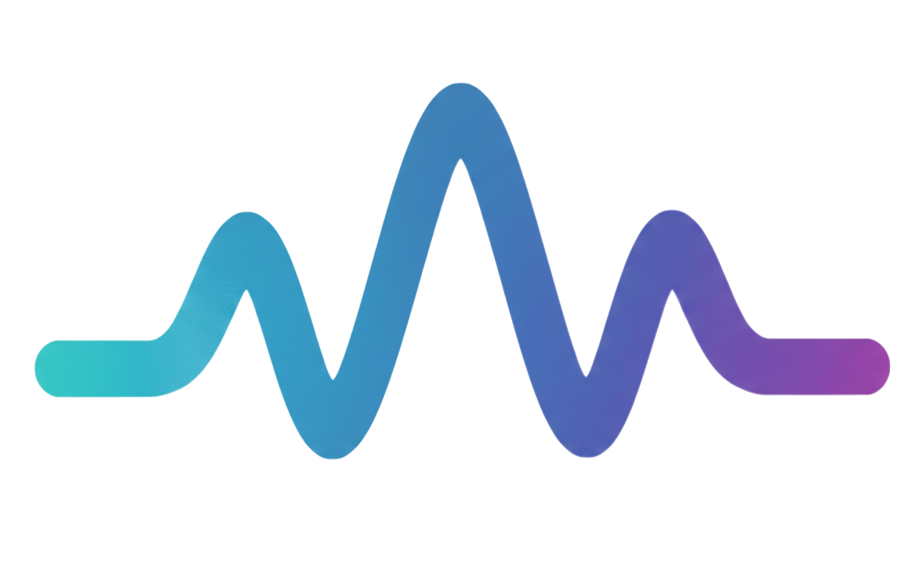}}%
    \hspace{0.2em}
    \begin{tabular}{@{}l@{}}
      Nonlinearity as Rank: 
      Generative Low-Rank \\Adapters with Radial Basis Functions
    \end{tabular}%
  }
}

\author{%
  {\bf Yihao Ouyang}$^{1*}$, 
  {\bf Shiwei Li}$^{1*}$, 
  {\bf Haozhao Wang}$^{1}$, 
  {\bf Xiandi Luo}$^{1}$, 
  {\bf Zhuoqi Hu}$^{1}$, \\
  {\bf Yuetong Song}$^{2}$, 
  {\bf Qiyu Qin}$^{1}$, 
  {\bf Yichen Li}$^{1}$, 
  {\bf Ruixuan Li}$^{1}$ \\
  $^1$ Huazhong University of Science and Technology, Wuhan, China \\
  $^2$ Hebei University of Technology, Tianjin, China \\
  \small $^*$ Equal contribution. 
}

\begin{document}

\maketitle

\begin{abstract}
Low-rank adaptation (LoRA) approximates the update of a pretrained weight matrix using the product of two low-rank matrices.
However, standard LoRA follows an \textbf{explicit-rank} paradigm, where increasing model capacity requires adding more rows or columns (i.e., \textbf{basis vectors}) to the low-rank matrices, leading to substantial parameter growth.
In this paper, we find that these basis vectors exhibit significant parameter redundancy and can be compactly represented by lightweight nonlinear functions.
Therefore, we propose \textbf{Generative Low-Rank Adapter (GenLoRA)}, which replaces explicit basis vector storage with nonlinear basis vector generation.
Specifically, GenLoRA maintains a latent vector for each low-rank matrix and employs a set of lightweight radial basis functions (RBFs) to synthesize the basis vectors.
Each RBF requires far fewer parameters than an explicit basis vector, enabling higher parameter efficiency in GenLoRA.
Extensive experiments across multiple datasets and architectures show that GenLoRA attains higher effective LoRA ranks under smaller parameter budgets, resulting in superior fine-tuning performance. The code is available at \url{https://anonymous.4open.science/r/GenLoRA}.
\end{abstract}

\section{Introduction}
Large language models (LLMs) have demonstrated unprecedented capabilities across a wide range of tasks \citep{dubey2024LLaMA, deepseek2024deepseek, cai2024internlm2}. To adapt these general-purpose models to specific downstream applications, fine-tuning is essential. However, as model sizes explode, full fine-tuning becomes prohibitive in both computation and memory. 
Parameter-efficient fine-tuning (PEFT) methods \citep{lester2021power,houlsby2019parameter} have thus emerged as a standard paradigm, among which low-rank adaptation (LoRA) \citep{hu2022lora} is widely adopted.
LoRA approximates the update of a pretrained weight $W \in\mathbb{R}^{m \times n}$ as a low-rank decomposition $\Delta W = BA$, where $B \in \mathbb{R}^{m \times r}$ and $A \in \mathbb{R}^{r \times n}$, with $r \ll \min(m, n)$.

The effectiveness of LoRA is closely tied to the rank $r$, as a larger rank generally increases the expressive capacity of the adaptation \citep{expressive-power}.
However, LoRA follows an explicit-rank paradigm: increasing $r$ requires maintaining additional independent basis vectors, corresponding to the columns of $B$ and the rows of $A$.
In modern LLMs, the projection dimensions $m$ and $n$ are typically on the order of thousands, which makes this expansion costly.
Specifically, as the rank increases, the total number of parameters grows linearly with $m+n$, leading to a parameter complexity of $\mathcal{O}(r(m+n))$.
This scaling behavior motivates a natural question: \textbf{\textit{do these high-dimensional basis vectors contain parameter redundancy that can be exploited for more efficient adaptation?}}

To examine the suspected parameter redundancy, we study learnable non-linear functions for basis vector reconstruction.
Specifically, given a pretrained LoRA matrix (i.e., the matrix $B$) containing $r$ basis vectors , we first construct a shared prototype vector by averaging these $r$ basis vectors. We then learn $r$ independent non-linear functions that take this shared prototype as input and reconstruct the original $r$ basis vectors.
In this work, we focus on Radial Basis Functions (RBFs)~\citep{buhmann2000radial},  which are renowned for their powerful approximation capabilities and high parameter efficiency. Additional details of the experimental setup are provided in the Appendix ~\ref{Implementation Details}.
As shown in Figure~\ref{fig:fig1} (\textit{Upper}), the results are compelling: the RBFs successfully recover the global shape and essential patterns of the original basis vectors. Most strikingly, this reconstruction requires very few learnable parameters, amounting to only 6.25\% of the parameters needed to explicitly maintain a single basis vector. This finding affirmatively answers our research question: substantial parameter redundancy indeed exists within the explicit basis vectors, as the information spread across them can be effectively compressed into non-linear transformations. In essence, nonlinearity can serve as a parameter-efficient substitute for rank, offering a direct solution to the identified redundancy, which we term ``Nonlinearity as Rank''.
\begin{wrapfigure}{R}{0.5\textwidth}
\vspace{-1em} 
\centering
\includegraphics[width=\linewidth]{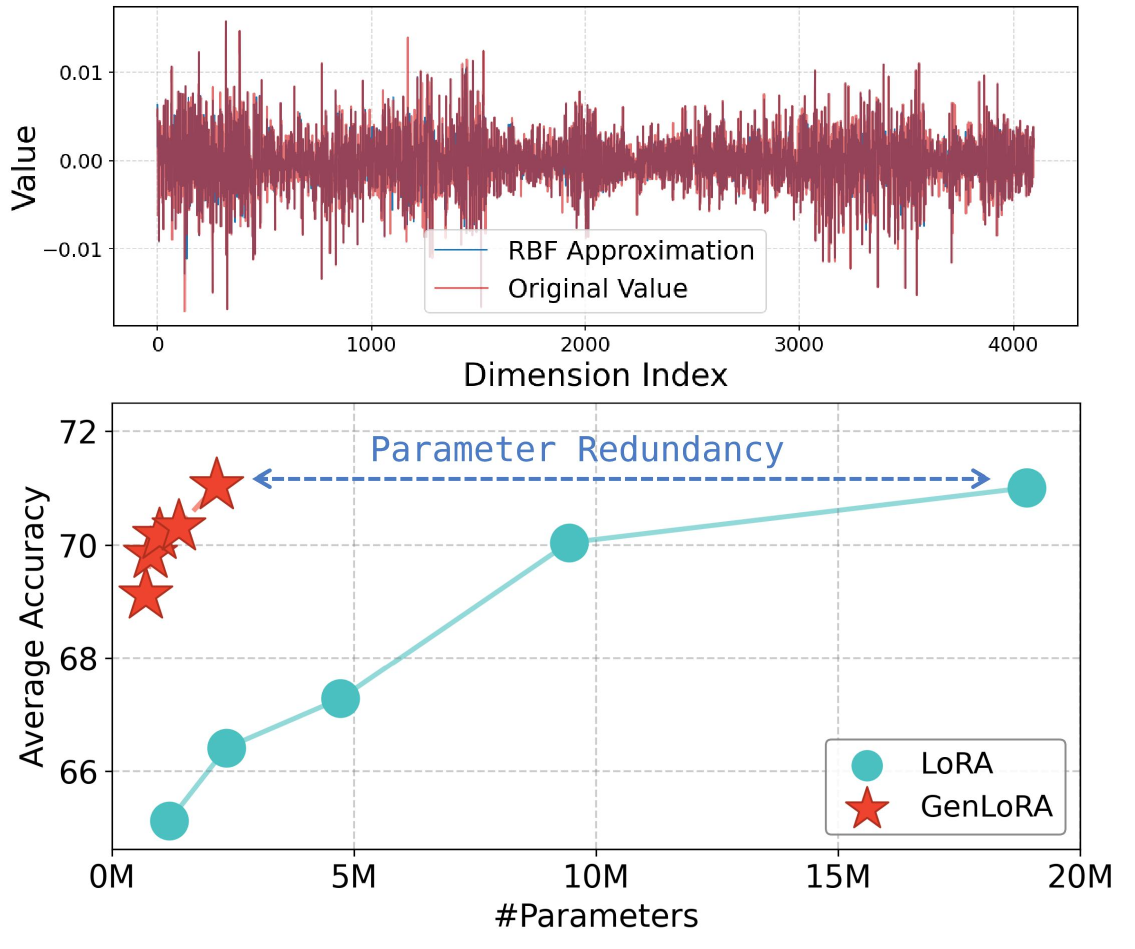}
\caption{\textbf{(\textit{Upper})} The reconstruction result of the first column vector of a pretrained LoRA matrix $B$. The trajectories in \textcolor{FitColor}{\textbf{hue}} illustrate the overlap between the Radial Basis Function (RBF) approximation and the original values.
\textbf{(\textit{Lower})} Accuracy--parameter trade-off on mathematical reasoning tasks with LLaMA3-8B. The five points for each method correspond to ranks $r = \{2, 4, 8, 16, 32\}$.
} 
\label{fig:fig1}
\vspace{-1em} 
\end{wrapfigure}

To address the parameter redundancy described above, we propose \textbf{Generative Low-Rank Adapters (GenLoRA)}, which generate basis vectors through nonlinearity rather than storing them as explicit parameters.
Specifically, GenLoRA maintains a single learnable latent vector for each low-rank matrix ($A$ or $B$) and learns $r$ RBF-based nonlinear generators.
Each generator maps the shared latent vector to one basis vector, and the resulting $r$ generated vectors directly form the matrices $A$ and $B$.
This design allows the effective rank to scale while keeping the parameter complexity at $\mathcal{O}(m + n + r|\theta|)$, where $|\theta|$ denotes the number of parameters in each generator and typically satisfies $|\theta| \ll \min(m, n)$.
As illustrated in Figure~\ref{fig:fig1} (\textit{Lower}), this formulation effectively removes the substantial parameter redundancy in standard LoRA and yields a better accuracy–parameter trade-off.
In addition, theoretical analysis shows that GenLoRA preserves the low-rank structure and guarantees bounded gradients, enabling stable training.

The core contributions of this study are summarized as follows:
\begin{itemize}
    \item We reveal significant parameter redundancy in LoRA through the basis vector reconstruction experiment. The results show that basis vectors can be effectively represented by non-linear functions, establishing nonlinearity as a parameter-efficient substitute for rank.
    \item We propose {GenLoRA}, the first generative LoRA framework that synthesizes low-rank matrices from shared latent vectors via RBF generators. This design mitigates the parameter redundancy, allowing effective rank increase with only minimal parameter cost.
    \item Theoretically, GenLoRA preserves the low-rank structure and guarantees bounded gradients, enabling stable training. Empirically, extensive experiments demonstrate it consistently outperforms LoRA variants with fewer trainable parameters, achieving 2-4\% and 6-8\% improvements on natural language and code generation tasks.
\end{itemize}

\section{Related Work}

\subsection{Low-Rank Adaptation}

To reduce fine-tuning overhead, LoRA \citep{hu2022lora} approximates the weight update $\Delta W$ via two low-rank matrices $B$ and $A$. Recent studies have refined this paradigm, primarily focusing on three directions: optimization, linear structural adjustments, and nonlinear extensions. Optimization-centric methods enhance stability and initialization; for instance, DoRA \citep{liu2024dora} decouples magnitude and direction updates, LoRA+ \citep{hayou2024lora+} employs differential learning rates, while PiSSA \citep{meng2024pissa} and LoRA-GA \citep{wang2024lora} leverage Singular Value Decomposition (SVD) for better initialization by applying it to the original pre-trained weights and sampled gradients, respectively. To mitigate the low-rank bottleneck, linear variants manipulate rank structure: AdaLoRA \citep{zhang2023adalora} and TopLoRA \citep{li2025beyond} utilize dynamic rank allocation or token-wise projections, while HiRA \citep{huang2025hira}, KronA \citep{edalati2025krona}, and MELoRA \citep{ren2024melora} employ Hadamard products, Kronecker products, or diagonal stacking to approximate higher-rank updates. Alternatively, NOLA \citep{koohpayegani2023nola} and HyperLoRA \citep{li2025hyperlora} reconstruct weights via linear combinations of random or learned basis matrices. Recognizing the limits of linearity, nonlinear variants like LoRAN \citep{li2024loran}, SineLoRA \citep{ji2024efficient}, and AuroRA \citep{dong2025aurora} introduce nonlinearity. AuroRA restricts nonlinearity to the \textbf{low-dimensional activation space}; however, it is often insufficient to capture complex dependencies due to the persistent information bottleneck. Instead, GenLoRA operates in a \textbf{high-dimensional parameter generation space}. By utilizing nonlinearity to generate high-dimensional information, GenLoRA bypasses this bottleneck, achieving superior expressiveness with minimal marginal parameter cost. Furthermore, unlike NOLA and HyperLoRA which restrict updates to a rigid subspace by generating mixing coefficients for shared bases, our method directly generates the basis matrices themselves, yielding an unconstrained hypothesis space and avoiding cross-layer representational interference.

\subsection{Nonlinearity Applications}
The pursuit of expressive approximation has long driven the evolution of nonlinear basis functions, ranging from rational functions \citep{baker1963theory} and trigonometric functions \citep{edmunds2012properties} to locally supported basis functions like B-splines \citep{de1972calculating} and Radial Basis Functions (RBFs) \citep{buhmann2000radial}. Specific nonlinear forms have been instrumental in recent advancements. For instance, NoRA \citep{yin2025don} applies rational functions to construct learnable activation functions with a numerator-denominator structure, demonstrating that optimizing the shape of activation functions is as substantial as optimizing weights. Similarly, trigonometric bases have been explored in works like SineLoRA \citep{ji2024efficient} and FourierFT \citep{gao2024parameter} to introduce periodic modulations or spectral transforms into low-rank updates. More recently, the Kolmogorov-Arnold Network (KAN) \citep{liu2024kan} represents a paradigm shift; based on the Kolmogorov-Arnold representation theorem, it shifts the learning focus from linear weights to parameterizable activation functions. AuroRA \citep{dong2025aurora} enhances adaptation flexibility by integrating KAN-based non-linear MLP layers. However, unlike methods that operate as \textbf{parameter-heavy MLP layers} which incur substantial overhead, GenLoRA minimizes parameter overhead by leveraging \textbf{learnable non-linear functions} strictly for weight generation. Furthermore, we strategically replace the computationally intensive B-splines typical of KANs with Radial Basis Functions (RBFs). This choice significantly boosts computational efficiency without compromising approximation accuracy, offering a robust and lightweight solution for high-dimensional parameter generation.

\section{Methodology}
In this section, we introduce \textbf{Generative Low-Rank Adapters (GenLoRA)}, a novel paradigm designed to synthesize high-dimensional basis vectors from a latent vector via lightweight Radial Basis Function (RBF) generators.
\subsection{Nonlinearity as Rank}
We first revisit LoRA from a generative perspective. 
As illustrated in Figure~\ref{fig:fig3}(a), we characterize standard LoRA as adhering to an explicit-rank paradigm, where the low-rank structure is directly constructed from independent and trainable explicit basis vectors. Given a frozen weight matrix $W_0 \in \mathbb{R}^{m \times n}$, standard LoRA learns a low-rank update:
\begin{equation}
\Delta W = BA = \sum_{i=1}^{r} b_i a_i^\top,
\label{eq:lora}
\end{equation}
where the matrices are composed of explicit basis vectors: $B = [b_1, \dots, b_r] \in \mathbb{R}^{m \times r}$ and $A = [a_1, \dots, a_r]^\top \in \mathbb{R}^{r \times n}$. Here, $b_i \in \mathbb{R}^m$ and $a_i \in \mathbb{R}^n$ denote the $i$-th column of $B$ and the $i$-th row of $A$, respectively. This implies that each incremental rank unit incurs a parameter cost of $m + n$, resulting in a total complexity of $\mathcal{O}(r(m + n))$.

\begin{figure*}[t]  
\centering
\includegraphics[width=\textwidth]{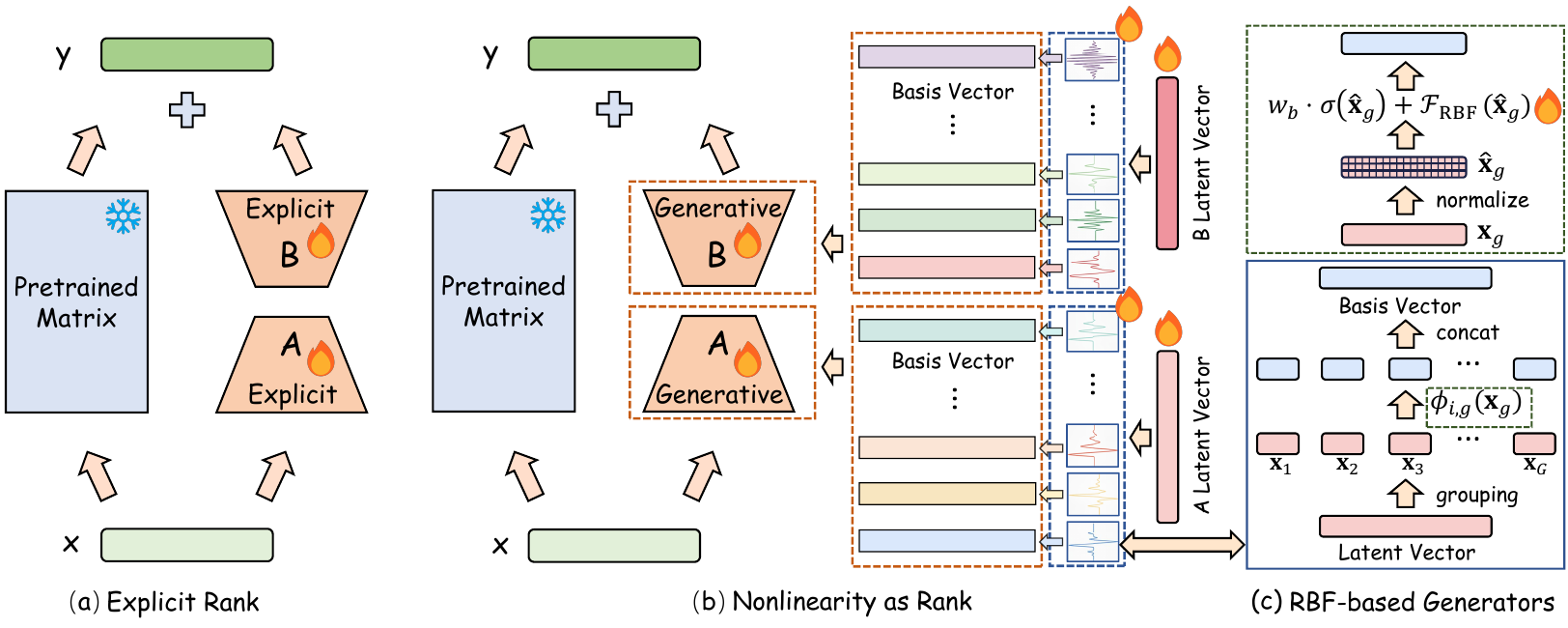} 
\caption{The overall architecture of Generative Low-Rank Adapters (GenLoRA). (a) \textbf{Explicit Rank:} Standard LoRA relies on the explicit-rank paradigm, where model capacity is constrained by the linear dimension of basis vectors. (b) \textbf{Nonlinearity as Rank:} Our proposed paradigm synthesizes basis vectors from latent vectors via generative functions, effectively reducing the parameter cost. (c) \textbf{RBF-based Generators:} The detailed internal workflow of the generator. }
\label{fig:fig3}
\end{figure*}
We propose to treat nonlinearity as a substitute for explicit basis vectors: instead of introducing a massive number of parameters for storage, we synthesize these vectors using a minimal set of parameters as illustrated in Figure~\ref{fig:fig3}(b). Formally, GenLoRA reparameterizes matrices $B$ and $A$ via latent vectors and nonlinear generators. Let $Z_B \in \mathbb{R}^{m \times 1}$ and $Z_A \in \mathbb{R}^{1 \times n}$ be two learnable latent vectors. Let $F_B = \{f^B_1, \dots, f^B_r\}$ and $F_A = \{f^A_1, \dots, f^A_r\}$ be two sets of lightweight nonlinear generators. Each generator takes the corresponding latent vector $Z_B$ or $Z_A$ as input to synthesize a basis vector:
\begin{equation}
\begin{split}
b_i &= f^{B}_i(Z_B; \theta^{B}_i) \in \mathbb{R}^{m}, \\
a_i &= f^{A}_i(Z_A; \theta^{A}_i) \in \mathbb{R}^{n},
\end{split}
\label{eq:genlora-directions}
\end{equation}
where $\theta_i^B$ and $\theta_i^A$ are the parameters of the nonlinear generators, and we use $\theta$ to denote the generator parameters when no distinction is required.
Stacking these vectors reconstructs the low-rank matrices:
\begin{equation}
\begin{split}
B = [b_1, \dots, b_r], \ A = [a_1, \dots, a_r]^\top,
\end{split}
\label{eq:genlora-ba}
\end{equation}
resulting in the GenLoRA update:
\begin{equation}
\Delta W_{\text{Gen}} = 
\sum_{i=1}^{r} f^{B}_i(Z_B; \theta^{B}_i) \, f^{A}_i(Z_A; \theta^{A}_i)^\top.
\label{eq:genlora-dw}
\end{equation}
Under this parameterization, the number of trainable parameters is primarily determined by the sizes of $Z_B$, $Z_A$, and the nonlinear generators.
Specifically, the parameter complexity is $\mathcal{O}(m + n + r|\theta|)$, where $|\theta|$ denotes the number of parameters for a pair of $f_i^A$ and $f_i^B$.
Since $|\theta| \ll m, n$, this complexity is substantially lower than the $\mathcal{O}(r(m+n))$ parameter cost of standard LoRA.
By exploiting the expressive power of nonlinear mappings from latent vectors to basis vectors, GenLoRA significantly enhances adapter capacity, thereby realizing our core principle of Nonlinearity as Rank.

\subsection{RBF-based Functional Generators}
Radial basis functions (RBFs) are well known for their strong approximation capability and high parameter efficiency.
Motivated by these properties, we instantiate the nonlinear generators $f^B_i$ and $f^A_i$ using RBF-based parameterizations.To further achieve stable and expressive nonlinear mapping, we introduce instance-wise normalization and group-wise decomposition in the generator design, as shown in Figure~\ref{fig:fig3}(c).

\textbf{Group-wise decomposition.} Let $z \in \mathbb{R}^d$ denote an input latent vector. To avoid directly high-dimensional mapping, we partition $z$ into $G$ low-dimensional sub-vectors $\{\mathbf{x}_1, \dots, \mathbf{x}_G\}$, where each $\mathbf{x}_g \in \mathbb{R}^{d_g}$ such that $d = G \cdot d_g$. To ensure that the $r$ synthesized basis vectors remain distinct and avoid rank collapse, each matrix is generated using $r$ nonlinear generators, and each generator $f_i$ processes its corresponding sub-vectors using $G$ distinct transformations:
\begin{equation}
    f_i(z) = \text{concat} \left( \phi_{i,1}(\mathbf{x}_1), \dots, \phi_{i,G}(\mathbf{x}_G) \right) \in \mathbb{R}^d, 
    \label{eq:group_wise}
\end{equation}
where $\phi_{i,g}$ represents the specific transformation corresponding to the $g$-th sub-vector within the $i$-th generator. This decomposition reduces the mapping dimensionality of the nonlinear functions, effectively mitigating mapping complexity while enhancing their expressiveness. Furthermore, the independence of these sub-mappings enables parallel processing, significantly reducing computational overhead.

\paragraph{Instance-wise normalization.}
The learned nonlinear generators are designed to operate on inputs within a predefined interval, known as the grid range. However, since the numerical distribution of the input latent vectors is typically unknown and dynamic, manually adapting the fixed RBF grid to cover all input values is impractical. This mismatch often causes some input features to fall outside the effective receptive field of the grid basis functions, reducing approximation efficiency. To resolve this, instead of adjusting the grid, we normalize the input features to match a standard distribution. For the $g$-th sub-vector $\mathbf{x}_g$, we compute the normalized vector $\hat{\mathbf{x}}_g$:
\begin{equation}
\hat{\mathbf{x}}_g = \frac{\mathbf{x}_g - \mu_g}{\sigma_g + \varepsilon},
\label{eq:norm}
\end{equation}
where $\mu_g$ and $\sigma_g$ are the mean and standard deviation of $\mathbf{x}_g$ and $\varepsilon$ is for stability. By normalizing inputs, we can confidently set the fixed grid range to $[-3, 3]$. According to the properties of the standard normal distribution, this interval covers approximately 99.7\% of the data points, ensuring that input features are uniformly captured by the RBF grid and preventing fitting failures caused by distribution shifts.

Finally, the generator synthesizes values by combining a base non-linear activation with an RBF nonlinear expansion. Using the normalized vector $\hat{\mathbf{x}}_g$ from Eq.~\ref{eq:norm}, we define the transformation $\phi_{i,g}$ applied to the sub-vector $\mathbf{x}_g$ as:
\begin{equation}
    \phi_{i,g}(\mathbf{x}_g) = w_{b, i, g} \cdot \sigma(\hat{\mathbf{x}}_g) + \mathcal{F}_{\text{RBF}, i, g}(\hat{\mathbf{x}}_g),
    \label{eq:generator_output}
\end{equation}
where $\sigma(\cdot)$ is a standard activation function (e.g., SiLU), $w_{b, i, g}$ is a learnable scalar scaling weight specific to the $i$-th generator and the $g$-th group, and $\mathcal{F}_{\text{RBF}, i, g}(\cdot)$ represents the expressive RBF-based nonlinear term which will be detailed in Sec.~\ref{gaussian}.

\subsection{Gaussian Radial Basis Functions}
\label{gaussian}

For notational simplicity, we omit the generator index $i$ and group index $g$ below. Given the normalized input distribution, we define a grid of $K$ uniform centers $\{\mu_k\}_{k=1}^{K}$ over a fixed interval. The element-wise response of the $k$-th Gaussian basis function to the normalized input sub-vector $\hat{\mathbf{x}}_g$ is:
\begin{equation}
\varphi_k(\hat{\mathbf{x}}_g) = \exp\left( - \left( \frac{\hat{\mathbf{x}}_g - \mu_k}{h} \right)^2 \right),
\label{eq:gaussian_basis}
\end{equation}
where $h$ is the bandwidth determined by the grid spacing, and the operations are applied element-wise.

The final nonlinear output is a linear combination of these basis responses. Let $W_{\text{rbf}} = \{w_1, \dots, w_K\} \in \mathbb{R}^K$ denote the learnable weight vector associated with the current group. The function $\mathcal{F}_{\text{RBF}}(\hat{\mathbf{x}}_g)$ is defined as:
\begin{equation}
\mathcal{F}_{\text{RBF}}(\hat{\mathbf{x}}_g) = \sum_{k=1}^{K} w_k \cdot \varphi_k(\hat{\mathbf{x}}_g).
\label{eq:rbf_sum}
\end{equation}
In implementation, Eqs.~\eqref{eq:gaussian_basis} and \eqref{eq:rbf_sum} are vectorized via broadcasting, leveraging GPU parallelism for efficient inference.

\subsection{Theoretical Analysis}
\label{Theoretical Analysis}
We now analyze the theoretical properties of GenLoRA, focusing on the algebraic structure of the weight updates and optimization stability. These analyses confirm that our ``Nonlinearity as Rank'' paradigm preserves low-rank constraints while ensuring robust training dynamics.

\begin{proposition}[Rank Boundedness]
\label{prop:rank_bound}
Let $\Delta W_{\text{Gen}} \in \mathbb{R}^{m \times n}$ be the weight update generated by GenLoRA with $r$ pairs of generator functions $\{f^B_i\}_{i=1}^r$ and $\{f^A_i\}_{i=1}^r$, where $r \ll \min(m, n)$. The rank of the update matrix satisfies:
\begin{equation}
\operatorname{rank}(\Delta W_{\text{Gen}}) \leq r.
\label{eq:rank_bound}
\end{equation}
\end{proposition}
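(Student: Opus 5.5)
The plan is to observe that, however nonlinear the generators are, each one outputs a fixed vector once evaluated at the current latent input. So the nonlinearity only affects \emph{which} vectors appear, not the algebraic form of the update. That form is still the factorized shape of Eq.~\eqref{eq:lora}, and the bound then follows from elementary linear algebra.

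First, fix the current values of the latent vectors $Z_B, Z_A$ and all generator parameters $\theta^B_i, \theta^A_i$. By Eq.~\eqref{eq:genlora-directions}, each $b_i = f^B_i(Z_B;\theta^B_i)$ is then a well-defined vector in $\mathbb{R}^m$, and each $a_i = f^A_i(Z_A;\theta^A_i)$ is a well-defined vector in $\mathbb{R}^n$. This holds for the concrete RBF construction of Eqs.~\eqref{eq:group_wise}--\eqref{eq:rbf_sum}: the concatenation of the group outputs $\phi_{i,g}$ has length exactly $d$, which is $m$ for $B$ and $n$ for $A$. Stacking these vectors as in Eq.~\eqref{eq:genlora-ba} gives $B \in \mathbb{R}^{m\times r}$ and $A \in \mathbb{R}^{r\times n}$, and Eq.~\eqref{eq:genlora-dw} becomes exactly $\Delta W_{\text{Gen}} = BA = \sum_{i=1}^r b_i a_i^\top$.

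Next, I would bound the rank in either of two equivalent ways.
\begin{itemize}
\item Each outer product $b_i a_i^\top$ has rank at most $1$. Its column space is contained in $\operatorname{span}\{b_i\}$, and it is zero if $b_i$ or $a_i$ vanishes. Subadditivity of rank then gives $\operatorname{rank}(\Delta W_{\text{Gen}}) \le \sum_{i=1}^r \operatorname{rank}(b_i a_i^\top) \le r$.
\item Alternatively, $\operatorname{rank}(BA) \le \min(\operatorname{rank} B, \operatorname{rank} A) \le r$. This holds because every column of $BA$ lies in the column space of $B$, which is spanned by $r$ vectors.
\end{itemize}
The bound holds for every parameter configuration, so it holds uniformly throughout training.

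There is no real obstacle here. The only point needing care is the first step: one must stress that the nonlinearity (SiLU, normalization, Gaussian RBFs) acts entrywise within each generated vector, and never couples different rank indices into a non-factorized matrix. I would also add a remark that the bound can be strict, for example if two generated vectors coincide. This is why the group-wise design uses $r$ distinct generators: to encourage attaining rank $r$, even though the proposition itself guarantees only the upper bound.
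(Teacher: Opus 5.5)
Your proposal is correct and follows the paper's proof: the paper checks entrywise that $\sum_{i} b_i a_i^\top = BA$ (Lemma~\ref{A.1}) and then uses $\mathcal{C}(BA)\subseteq\mathcal{C}(B)$ to get $\operatorname{rank}(BA)\le\operatorname{rank}(B)\le r$ (Lemma~\ref{A.2}), which is your second bullet; your subadditivity bullet is an equally valid alternative. One harmless imprecision is that the instance-wise normalization is not entrywise, since it uses each sub-vector's mean and standard deviation, but this does not affect your argument because all that matters is that each generator returns a single fixed vector.
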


Proposition~\ref{prop:rank_bound} theoretically guarantees that GenLoRA does not violate the low-rank assumption. Although the mapping from latent space to weight space is highly nonlinear, the geometric structure of the final update remains strictly confined to an $r$-dimensional subspace. The corresponding proof is provided in Appendix ~\ref{app:proof_prop1}.
\begin{proposition}[Gradient Boundedness]
In GenLoRA, the gradients of the loss function $\mathcal{L}$ with respect to the
latent vectors $Z_A, Z_B$ and generator weights $\Theta_A, \Theta_B$ are locally bounded.
That is, for any bounded parameter set, there exist constants $C_1, C_2 > 0$
such that
\begin{equation}
\left\| \frac{\partial \mathcal{L}}{\partial Z} \right\| < C_1,
\qquad
\left\| \frac{\partial \mathcal{L}}{\partial \Theta} \right\| < C_2.
\end{equation}
\label{prop:grad_bound}
\end{proposition}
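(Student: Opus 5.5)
The plan is a chain-rule argument combined with continuity and compactness. I take ``locally bounded'' to mean that on any bounded set $\mathcal{S}$ of parameters $(Z_A,Z_B,\Theta_A,\Theta_B)$, and for inputs in a bounded set, the gradient norms are uniformly bounded.

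\textbf{Step 1: reduce to a bounded upstream gradient.} Write $G=\partial\mathcal{L}/\partial(\Delta W_{\text{Gen}})\in\mathbb{R}^{m\times n}$. I will assume, as is standard, that the loss is $C^1$ in the adapted weight $W_0+\Delta W_{\text{Gen}}$. By Eq.~\eqref{eq:genlora-dw}, $\Delta W_{\text{Gen}}$ is a continuous function of the parameters, so it ranges over a bounded set when the parameters lie in $\mathcal{S}$. Continuity of $\partial\mathcal{L}/\partial W$ on the closure of that set (which is compact) then gives $\|G\|\le C_G$. The chain rule yields
\[
\frac{\partial \mathcal{L}}{\partial B}=G A^\top,\qquad \frac{\partial \mathcal{L}}{\partial A}=B^\top G ,
\]
and $A,B$ are bounded on $\mathcal{S}$. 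The reason is that every RBF term satisfies $0<\varphi_k\le 1$, so $|\mathcal{F}_{\text{RBF}}|\le\sum_k|w_k|$. Also $|\sigma(\hat x)|\le |\hat x|$ for SiLU, and $|\hat x|\le \sqrt{d_g}$, since a standardized vector of length $d_g$ has each entry at most $\sqrt{d_g}$ in magnitude (when $\sigma_g\ge 0$, $\varepsilon>0$). Hence $\|\partial\mathcal{L}/\partial B\|$ and $\|\partial\mathcal{L}/\partial A\|$ are bounded.

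\textbf{Step 2: bound the generator Jacobians.} Consider each scalar output $\phi_{i,g}(\mathbf{x}_g)_j$.
\begin{itemize}
\item \emph{Weights $\Theta$.} The derivative with respect to $w_{b,i,g}$ is $\sigma(\hat x_j)$, and the derivative with respect to $w_k$ is $\varphi_k(\hat x_j)\in(0,1]$. Both are bounded by Step 1, independently of $\Theta$.
\item \emph{Normalized input $\hat{\mathbf{x}}$.} The derivative is
\[
w_b\,\sigma'(\hat x_j)-\sum_k w_k\,\frac{2(\hat x_j-\mu_k)}{h^2}\,\varphi_k(\hat x_j).
\]
Here $\sigma'$ is bounded for SiLU, and $|t|e^{-t^2}\le (2e)^{-1/2}$ is bounded, so this is at most $C(|w_b|+\sum_k|w_k|)/h$.
\end{itemize}
Composing these with Step 1 gives $\|\partial\mathcal{L}/\partial\Theta\|\le C_2$ on $\mathcal{S}$.

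\textbf{Step 3: the normalization Jacobian (main obstacle).} The Jacobian of $\mathbf{x}\mapsto(\mathbf{x}-\mu)/(\sigma+\varepsilon)$ is
\[
\frac{1}{\sigma+\varepsilon}\Bigl(I-\tfrac{1}{d_g}\mathbf{1}\mathbf{1}^\top-\tfrac{1}{d_g\,\sigma}\,\hat{\mathbf{x}}'(\mathbf{x}-\mu)^\top\Bigr),
\]
where $\hat{\mathbf{x}}'=(\mathbf{x}-\mu)/(\sigma+\varepsilon)$. The factor $1/(\sigma+\varepsilon)\le 1/\varepsilon$ is harmless. The worry is the $1/\sigma$ coming from $\partial\sigma/\partial\mathbf{x}$ at $\sigma\to 0$.

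I will show that the problematic term equals
\[
\frac{(\mathbf{x}-\mu)(\mathbf{x}-\mu)^\top}{d_g\,\sigma\,(\sigma+\varepsilon)^2}.
\]
Its operator norm is $\|\mathbf{x}-\mu\|^2/(d_g\sigma(\sigma+\varepsilon)^2)=\sigma/(\sigma+\varepsilon)^2\le 1/(4\varepsilon)$, so it stays bounded even as $\sigma\to 0$. At $\sigma=0$ exactly the map is not differentiable. There I will either interpret the gradient as the bounded one-sided or Clarke gradient, or note that implementations compute $\sigma=\sqrt{\mathrm{var}+\epsilon'}$, which removes the singularity.

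Multiplying the bounded Jacobians through the chain rule then gives $\|\partial\mathcal{L}/\partial Z\|\le C_1$ on $\mathcal{S}$, which completes the proof.
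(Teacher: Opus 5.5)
Your proposal is correct and follows essentially the same chain-rule argument as the paper's appendix proof. For $\Theta$, both use the bounded Gaussian terms $\varphi_k$ and the SiLU term $\sigma(\hat x)$. For $Z$, both use bounded $\sigma'$, bounded $u e^{-u^2}$, and the same explicit normalization Jacobian.

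Your version is more careful at two places the paper glosses over. First, you bound the SiLU term uniformly via $|\hat x_j|\le\sqrt{d_g}$, where the paper only appeals to a ``fixed $\hat x$''. Second, you show the rank-one term has norm $\sigma/(\sigma+\varepsilon)^2\le 1/(4\varepsilon)$, where the paper only cites $\varepsilon>0$ and finite second moments, which by themselves do not control the $1/\sigma$ factor.

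One small correction: the normalization map is in fact differentiable, even $C^1$, at $\sigma=0$. With $y=\mathbf{x}-\mu\mathbf{1}$ one has $\hat{\mathbf{x}}=y/\varepsilon+O(\|y\|^2)$, and the rank-one term vanishes as $\sigma\to 0$, so the Clarke-gradient fallback is unnecessary.
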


Proposition~\ref{prop:grad_bound} arises from two key mechanisms in our design. First, the instance-wise normalization ensures that the inputs to the RBF generators lie within a stable, numerically bounded range. Second, the Gaussian RBFs (Eq. ~\eqref{eq:gaussian_basis}) and their derivatives are inherently bounded and smooth functions. This combination effectively prevents the risk of exploding gradients, ensuring that GenLoRA can be trained stably even with a high effective rank $r$. A detailed proof is provided in Appendix \ref{app:proof_grad_bound}.

\section{Experiments}
In this section, we evaluate GenLoRA on three benchmarks across various architectures, followed by ablation studies verifying its scalability and visualizing singular values.

\subsection{Experimental Settings}
\label{Experimental Settings}
First,  we evaluate the natural language generation (NLG) capabilities of GenLoRA on the mathematical reasoning (Math10K)\citep{hu2023llm} and commonsense reasoning (Commonsense170K) benchmarks\citep{hu2023llm}. Both benchmarks comprise a training corpus and multiple test sub-tasks. For each benchmark, we fine-tune the models on the training data and subsequently assess performance across all sub-tasks. To demonstrate the versatility of GenLoRA, we conduct experiments utilizing various model architectures and scales, including Gemma-7B\citep{team2024gemma}, LLaMA-3-8B~\citep{dubey2024LLaMA}, and Qwen2.5-14B~\citep{DBLP:journals/corr/abs-2412-15115}.  Next, we assess the code generation capability of GenLoRA using LLaMA-3-8B~\citep{dubey2024LLaMA}.We fine-tune the model on the Magicoder-Evol-Instruct-110k dataset\citep{wei2023magicoder},  a curated and decontaminated subset of WizardCoder~\citep{luo2023wizardcoder} comprising high-quality instruction--response pairs for programming tasks. Subsequently, we assess the performance on the Humaneval+ test with 50 sampled completions per problem. We report Pass@1, Pass@5, and Pass@10 accuracy following the standard protocol via the BigCode Evaluation Harness~\citep{allal2022framework}. Further experimental details are provided in Appendix \ref{experiment details}.


\subsection{Overall Performance}
\paragraph{Results on Mathematical Reasoning Tasks.} 
As shown in Table~\ref{tab:math_reasoning}, GenLoRA demonstrates robust scalability and superior performance across diverse model architectures, consistently achieving higher accuracy with significantly fewer parameters. Specifically, on LLaMA3-8B, GenLoRA ($r=8$) outperforms LoRA by an average of 2.3\% while utilizing only about one-fifth of the parameters (0.98M vs. 4.72M). This advantage extends to 3.18\% at rank 32, where the parameter count remains less than half of the baseline. Similarly, on Qwen2.5-14B and Gemma-7B, GenLoRA achieves the highest average accuracy, surpassing all rank-8 baselines while requiring merely a fraction of their parameter budget. In comparison, AuroRA, which aims to optimize the trade-off between parameter efficiency and model expressivity, falls short against GenLoRA. For instance, on the LLaMA3 benchmark, GenLoRA ($r=8$) not only surpasses AuroRA ($r=8$) by over 3\% in accuracy but does so using approximately 80\% fewer parameters. This clearly demonstrates that GenLoRA provides a far superior solution to the efficiency-performance trade-off.

\begin{table*}[t]
\centering
\caption{The accuracy on Mathematical Reasoning tasks (Math10K) with various pretrained models.}
\label{tab:math_reasoning}
\resizebox{0.95\textwidth}{!}{
\begin{tabular}{l l *{8}{c}}
\toprule
& \textbf{{\textbf{Method}}} & \textbf{\#Params} & \textbf{AddSub} & \textbf{MultiArith} & \textbf{SingleEq} & \textbf{SVAMP} & \textbf{gsm8k} & \textbf{AQuA} & \textbf{Avg} \\
\midrule
\rowcolor{baselinecolor}
\cellcolor{white} & LoRA ($r=8$) & 4.72M & 82.28 & 85.83 & 89.76 & 66.40 & 56.18 & 23.23 & 67.28 \\
\rowcolor{baselinecolor}
\cellcolor{white} & MELoRA ($r=8$) & 4.72M & 83.80 & 84.00 & 86.61 & 62.90 & 47.84 & 22.83 & 64.70 \\
\rowcolor{baselinecolor}
\cellcolor{white} & HiRA ($r=8$) & 4.72M & 85.82 & 88.16 & 91.34 & 68.90 & 54.66 & 27.55 & 69.41 \\
\rowcolor{baselinecolor}
\cellcolor{white} & DoRA ($r=8$) & 4.92M & 83.04 & 87.67 & 91.14 & 66.60 & 56.41 & 21.65 & 67.75 \\
\rowcolor{aurora}
\cellcolor{white} & AuroRA ($r=2$) & 1.18M & 80.00 & 83.67 & 88.19 & 64.40 & 50.27 & 25.59 & 65.35 \\
\rowcolor{aurora}
\cellcolor{white} & AuroRA ($r=8$) & 4.72M & 80.25 & 87.50 & 89.57 & 67.30 & 56.86 & 20.47 & 66.99 \\
\rowcolor{ourcolor}
\cellcolor{white} & \textbf{GenLoRA ($r=8, g=16$)} & 0.98M & 88.35 & 90.67 & 93.70 & 70.80 & 53.90 & 23.62 & \underline{70.17} \\
\rowcolor{ourcolor}
\cellcolor{white} \multirow{-9}{*}{\rotatebox{90}{LLaMA-3-8B}} & \textbf{GenLoRA ($r=32, g=16$)} & 2.16M & 87.09 & 92.71 & 94.49 & 70.80 & 56.18 & 25.59 & \textbf{71.05} \\
\midrule
\rowcolor{baselinecolor}
\cellcolor{white} & LoRA ($r=8$) & 4.82M & 86.58 & 89.33 & 89.57 & 72.70 & 56.71 & 30.71 & 70.93 \\
\rowcolor{baselinecolor}
\cellcolor{white} & MELoRA ($r=8$) & 4.82M & 84.81 & 89.00 & 89.96 & 71.40 & 55.12 & 29.53 & 69.97 \\
\rowcolor{baselinecolor}
\cellcolor{white} & HiRA ($r=8$) & 4.82M & 85.85 & 88.67 & 90.55 & 72.10 & 55.04 & 26.38 & 69.76 \\
\rowcolor{baselinecolor}
\cellcolor{white} & DoRA ($r=8$) & 5.16M & 86.58 & 90.33 & 90.16 & 76.00 & 57.09 & 26.38 & \underline{71.09} \\
\rowcolor{aurora}
\cellcolor{white} & AuroRA ($r=2$) & 1.20M & 85.82 & 87.33 & 88.58 & 73.60 & 56.86 & 27.95 & 70.03 \\
\rowcolor{aurora}
\cellcolor{white} & AuroRA ($r=8$) & 4.82M & 86.84 & 89.83 & 90.75 & 73.60 & 57.70 & 26.38 & 70.85 \\
\rowcolor{ourcolor}
\cellcolor{white} & \textbf{GenLoRA ($r=8, g=16$)} & 0.95M & 84.56 & 91.33 & 90.94 & 71.00 & 56.86 & 28.74 & 70.57 \\
\rowcolor{ourcolor}
\cellcolor{white} \multirow{-9}{*}{\rotatebox{90}{Gemma-7B}} & \textbf{GenLoRA ($r=16, g=16$)} & 1.29M & 87.34 & 92.17 & 90.55 & 71.80 & 56.19 & 28.35 & \textbf{71.56} \\
\midrule
\rowcolor{baselinecolor}
\cellcolor{white} & LoRA ($r=8$) & 8.65M & 91.14 & 96.17 & 92.52 & 86.10 & 75.51 & 33.46 & 79.15 \\
\rowcolor{baselinecolor}
\cellcolor{white} & MELoRA ($r=8$) & 8.65M & 92.91 & 94.00 & 91.73 & 83.60 & 74.75 & 31.89 & 78.15 \\
\rowcolor{baselinecolor}
\cellcolor{white} & HiRA ($r=8$) & 8.65M & 92.41 & 94.67 & 91.14 & 84.40 & 75.21 & 35.43 & 78.88 \\
\rowcolor{baselinecolor}
\cellcolor{white} & DoRA ($r=8$) & 8.99M & 93.16 & 96.00 & 92.52 & 85.50 & 75.28 & 33.86 & 79.39 \\
\rowcolor{aurora}
\cellcolor{white} & AuroRA ($r=2$) & 2.16M & 93.41 & 95.00 & 91.73 & 84.10 & 73.24 & 33.07 & 78.43 \\
\rowcolor{aurora}
\cellcolor{white} & AuroRA ($r=8$) & 8.65M & 93.16 & 96.50 & 92.52 & 85.00 & 75.74 & 34.65 & 79.59 \\
\rowcolor{ourcolor}
\cellcolor{white} & \textbf{GenLoRA ($r=8, g=8$)} & 1.38M & 92.15 & 96.00 & 92.13 & 85.90 & 76.19 & 35.83 & \underline{79.70} \\
\rowcolor{ourcolor}
\cellcolor{white} \multirow{-9}{*}{\rotatebox{90}{Qwen2.5-14B}} & \textbf{GenLoRA ($r=32, g=8$)} & 2.26M & 91.65 & 96.17 & 93.31 & 87.80 & 74.68 & 38.58 & \textbf{80.36} \\
\bottomrule
\end{tabular}
}
\end{table*}
\begin{table*}[t]
\centering
\caption{The accuracy on Commonsense Reasoning benchmarks with various pretrained models.}
\label{tab:commonsense}
\resizebox{0.95\textwidth}{!}{
\begin{tabular}{l l *{12}{c}}
\toprule
& \textbf{Method} & \textbf{\#Params} & \textbf{OBQA} & \textbf{ARC-C} & \textbf{Wino} & \textbf{PIQA} & \textbf{Social} & \textbf{ARC-E} & \textbf{BoolQ} & \textbf{Hella} & \textbf{Avg} \\
\midrule
\rowcolor{baselinecolor}
\cellcolor{white} & LoRA ($r=8$) & 4.72M & 81.60 & 78.50 & 82.08 & 87.54 & 77.79 & 91.67 & 70.40 & 87.95 & 82.19 \\
\rowcolor{baselinecolor}
\cellcolor{white} & MELoRA ($r=8$) & 4.72M & 74.80 & 75.00 & 71.43 & 84.71 & 72.98 & 89.44 & 67.74 & 80.84 & 77.12 \\
\rowcolor{baselinecolor}
\cellcolor{white} & HiRA ($r=8$) & 4.72M & 84.40 & 80.29 & 85.40 & 89.01 & 79.43 & 91.96 & 65.44 & 92.17 & 83.51 \\
\rowcolor{baselinecolor}
\cellcolor{white} & DoRA ($r=8$) & 4.92M & 81.00 & 78.41 & 82.24 & 87.65 & 77.99 & 91.58 & 70.15 & 88.12 & 82.14 \\
\rowcolor{aurora}
\cellcolor{white} & AuroRA ($r=2$) & 1.18M & 76.20 & 77.13 & 72.93 & 84.28 & 73.08 & 89.73 & 67.77 & 82.45 & 77.95 \\
\rowcolor{aurora}
\cellcolor{white} & AuroRA ($r=8$) & 4.72M & 81.60 & 78.50 & 82.08 & 87.27 & 77.48 & 91.62 & 70.46 & 88.05 & 82.13 \\
\rowcolor{ourcolor}
\cellcolor{white} & \textbf{GenLoRA ($r=8, g=16$)} & 0.98M & 85.60 & 79.01 & 84.85 & 88.57 & 79.38 & 91.75 & 71.83 & 91.45 & \underline{84.05} \\
\rowcolor{ourcolor}
\cellcolor{white} \multirow{-9}{*}{\rotatebox{90}{LLaMA-3-8B}} & \textbf{GenLoRA ($r=32, g=16$)} & 2.16M & 86.80 & 80.12 & 85.71 & 88.74 & 80.91 & 91.37 & 72.63 & 92.87 & \textbf{84.89} \\
\midrule
\rowcolor{baselinecolor}
\cellcolor{white} & LoRA ($r=8$) & 4.82M & 81.80 & 82.76 & 86.19 & 88.30 & 77.99 & 92.55 & 68.84 & 91.16 & 83.70 \\
\rowcolor{baselinecolor}
\cellcolor{white} & MELoRA ($r=8$) & 4.82M & 79.60 & 79.86 & 78.61 & 84.93 & 74.67 & 89.73 & 69.27 & 87.81 & 80.56 \\
\rowcolor{baselinecolor}
\cellcolor{white} & HiRA ($r=8$) & 4.82M & 85.80 & 83.45 & 85.00 & 88.30 & 77.94 & 93.14 & 69.66 & 92.31 & \underline{84.45} \\
\rowcolor{baselinecolor}
\cellcolor{white} & DoRA ($r=8$) & 5.16M & 82.60 & 80.89 & 84.06 & 85.47 & 75.95 & 91.16 & 69.20 & 89.42 & 82.34 \\
\rowcolor{aurora}
\cellcolor{white} & AuroRA ($r=2$) & 1.20M & 80.00 & 81.48 & 79.24 & 85.75 & 75.90 & 90.57 & 68.13 & 85.63 & 80.84 \\
\rowcolor{aurora}
\cellcolor{white} & AuroRA ($r=8$) & 4.82M & 82.40 & 82.76 & 83.50 & 85.96 & 76.82 & 91.75 & 70.28 & 90.08 & 82.94 \\
\rowcolor{ourcolor}
\cellcolor{white} & \textbf{GenLoRA ($r=8, g=8$)} & 0.77M & 84.80 & 80.80 & 86.90 & 87.54 & 77.43 & 92.26 & 71.25 & 92.07 & 84.13 \\
\rowcolor{ourcolor}
\cellcolor{white} \multirow{-9}{*}{\rotatebox{90}{Gemma-7B}} & \textbf{GenLoRA ($r=16, g=8$)} & 0.95M & 88.00 & 84.64 & 88.48 & 89.23 & 81.53 & 94.19 & 70.95 & 94.19 & \textbf{86.40} \\
\midrule
\rowcolor{baselinecolor}
\cellcolor{white} & LoRA ($r=8$) & 8.65M & 93.00 & 92.66 & 84.85 & 91.89 & 81.47 & 97.35 & 74.71 & 95.33 & 88.91 \\
\rowcolor{baselinecolor}
\cellcolor{white} & MELoRA ($r=8$) & 8.65M & 90.40 & 91.30 & 77.19 & 90.75 & 79.79 & 97.10 & 72.54 & 92.91 & 86.50 \\
\rowcolor{baselinecolor}
\cellcolor{white} & HiRA ($r=8$) & 8.65M & 94.20 & 93.26 & 89.98 & 93.09 & 83.88 & 97.98 & 75.87 & 96.50 & 90.60 \\
\rowcolor{baselinecolor}
\cellcolor{white} & DoRA ($r=8$) & 8.99M & 93.20 & 92.75 & 84.85 & 92.00 & 81.73 & 97.35 & 74.71 & 95.37 & 88.99 \\
\rowcolor{aurora}
\cellcolor{white} & AuroRA ($r=2$) & 2.16M & 90.60 & 91.21 & 78.22 & 90.86 & 80.19 & 97.05 & 73.49 & 93.81 & 86.93 \\
\rowcolor{aurora}
\cellcolor{white} & AuroRA ($r=8$) & 8.65M & 92.60 & 92.66 & 84.85 & 91.89 & 81.58 & 97.39 & 74.62 & 95.33 & 88.86 \\
\rowcolor{ourcolor}
\cellcolor{white} & \textbf{GenLoRA ($r=8, g=8$)} & 1.38M & 94.80 & 93.60 & 91.08 & 92.98 & 84.24 & 97.77 & 76.39 & 96.25 & \underline{90.80} \\
\rowcolor{ourcolor}
\cellcolor{white} \multirow{-9}{*}{\rotatebox{90}{Qwen2.5-14B}} & \textbf{GenLoRA ($r=32, g=8$)} & 2.26M & 94.80 & 94.20 & 90.77 & 93.31 & 84.44 & 98.11 & 76.79 & 96.36 & \textbf{91.10} \\
\bottomrule
\end{tabular}
}
\end{table*}
\paragraph{Results on Commonsense Reasoning Tasks.}
As shown in Table~\ref{tab:commonsense}, GenLoRA also achieves the best accuracy in commonsense reasoning tasks. The experimental conclusions are highly consistent with those in mathematical reasoning tasks. At the same rank ($r=8$), GenLoRA’s accuracy across the three models is, on average, 1.4\% higher than that of LoRA, while requiring more than four times fewer parameters (0.98M vs. 4.72M). Furthermore, increasing the rank allows GenLoRA to further extend these gains, delivering absolute improvements of up to 2.7\% on LLaMA3-8B and Gemma-7B compared to the standard LoRA baseline. Notably, even with this expanded capacity, GenLoRA still utilizes less than half the parameter budget of the rank-8 baselines, demonstrating extreme parameter efficiency.

\paragraph{Results on Code Generation Tasks.} 
As presented in Table~\ref{tab:code_gen}, GenLoRA consistently outperforms competing methods on the code generation task using the LLaMA3-8B model. Specifically, GenLoRA ($r=8$) delivers comprehensive improvements across Pass@1, Pass@5, and Pass@10 metrics compared to standard LoRA, while utilizing merely about one-fifth of the parameters. Scaling GenLoRA to a rank of 32 yields further substantial gains; compared to standard LoRA, GenLoRA ($r=32$) achieves absolute improvements of +7.51\% in Pass@1, +6.44\% in Pass@5, and +6.46\% in Pass@10, demonstrating significant performance margins over other competitive baselines.  Notably, even with the expanded capacity at rank 32, GenLoRA (2.16M) operates with less than half the parameter count of the standard rank-8 baselines, highlighting its parameter efficiency.


\paragraph{Efficiency Analysis.} 
Beyond performance gains, we assess the computational overhead in Table 3. Remarkably, GenLoRA ($r=8$) achieves a training time of 11.7h, corresponding to a relative time of $1.01\times$, comparable to the efficiency of standard LoRA. We attribute this low latency primarily to the utilization of Radial Basis Functions. Compared to B-spline functions, which require complex knot management and can be less efficient for vectorized processing, the core RBF computation relies on pure tensor operations such as vectorized element-wise exponentiations and matrix multiplications. This structure is inherently suited for massive GPU parallelism, enabling GenLoRA to introduce powerful expressive capabilities with minimal computational overhead.

\begin{table*}[t!]
\centering
\caption{The performance on Code Generation tasks (HumanEval+) with LLaMA-3-8B fine-tuned on Magicoder-Evol-Instruct-110k.}
\label{tab:code_gen}
\resizebox{0.95\textwidth}{!}{
\begin{tabular}{l l *{6}{c}}
\toprule
& \textbf{Method} & \textbf{\#Params} & \textbf{Train Time} & \textbf{Relative Time} & \textbf{Pass@1} & \textbf{Pass@5} & \textbf{Pass@10} \\
\midrule
\rowcolor{baselinecolor}
\cellcolor{white} & LoRA ($r=8$) & 4.72M & 11.6h & $1.00\times$ & 20.79 & 35.48 & 41.31 \\
\rowcolor{baselinecolor}
\cellcolor{white} & MELoRA ($r=8$) & 4.72M & 10.3h & $0.88\times$ & 17.41 & 26.99 & 31.53 \\
\rowcolor{baselinecolor}
\cellcolor{white} & HiRA ($r=8$) & 4.72M & 10.6h & $0.91\times$ & 25.33 & 35.36 & 39.72 \\
\rowcolor{baselinecolor}
\cellcolor{white} & DoRA ($r=8$) & 4.92M & 11.9h & $1.03\times$ & 17.01 & 31.37 & 37.36 \\
\rowcolor{aurora}
\cellcolor{white} & AuroRA ($r=2$) & 1.18M & 11.1h & $0.96\times$ & 21.4 & 33.12 & 38.28 \\
\rowcolor{aurora}
\cellcolor{white} & AuroRA ($r=8$) & 4.72M & 11.8h & $1.02\times$ & 17.15 & 32.11 & 38.32 \\
\rowcolor{ourcolor}
\cellcolor{white} & \textbf{GenLoRA ($r=8, g=16$)} & 0.98M & 11.7h & $1.01\times$ & \underline{25.42} & \underline{36.36} & \underline{41.48} \\
\rowcolor{ourcolor}
\cellcolor{white} \multirow{-9}{*}{\rotatebox{90}{LLaMA-3-8B}} & \textbf{GenLoRA ($r=32, g=16$)} & 2.16M & 12.2h & $1.05\times$ & \textbf{28.30} & \textbf{41.92} & \textbf{47.77} \\
\bottomrule
\end{tabular}
}
\end{table*}

\subsection{Ablation Studies}

Next, we conduct ablation studies on the LLaMA3-8B model to validate the effectiveness of our proposed Instance-wise Normalization and Group-wise Decomposition. To ensure a fair comparison, the parameter count was maintained at approximately $0.98$M across all variants.
As shown in Table~\ref{tab:ablation_genlora}, removing normalization leads to a sharp performance decline from 70.17\% to 63.21\%. We attribute this to the uncontrolled distribution of grid inputs; without normalization, inputs tend to cluster within a narrow range, falling into largely identical grids. This concentration prevents the model from utilizing the full grid resolution, rendering the RBF fitting capability ineffective.

Regarding the grouping strategy, we compared our default setting ($g=16, r=8$) against a non-grouped variant ($g=1$) with an increased rank ($r=128$) to match the parameter budget. The results show that the non-grouped version lags significantly behind. This demonstrates that simply expanding the rank is less effective than our grouping mechanism. Specifically, the group-wise decomposition enhances the non-linear fitting capability by avoiding RBF functions fitting higher-dimensional data, providing a structural advantage that outweighs raw rank increase.

\begin{table}[h]
    \centering
    \small
    \caption{Ablation study on the components of GenLoRA.}
    \label{tab:ablation_genlora}
    \vspace{0.2cm}
    \begin{tabular}{lcc}
        \toprule
        \textbf{Method} & \textbf{\#Params} & \textbf{Avg} \\
        \midrule
        GenLoRA($r=8, g=16$) & 0.98M & \textbf{70.17} \\
        GenLoRA($r=8, g=16$) w.o. Norm & 0.98M & 63.21 \\
        GenLoRA($r=128, g=1$) w.o. Group & 0.98M & \underline{66.05} \\
        \bottomrule
    \end{tabular}
\end{table}

\begin{figure*}[t] 
    \centering
    \begin{subfigure}[b]{0.48\textwidth}
        \centering
        \includegraphics[width=\linewidth]{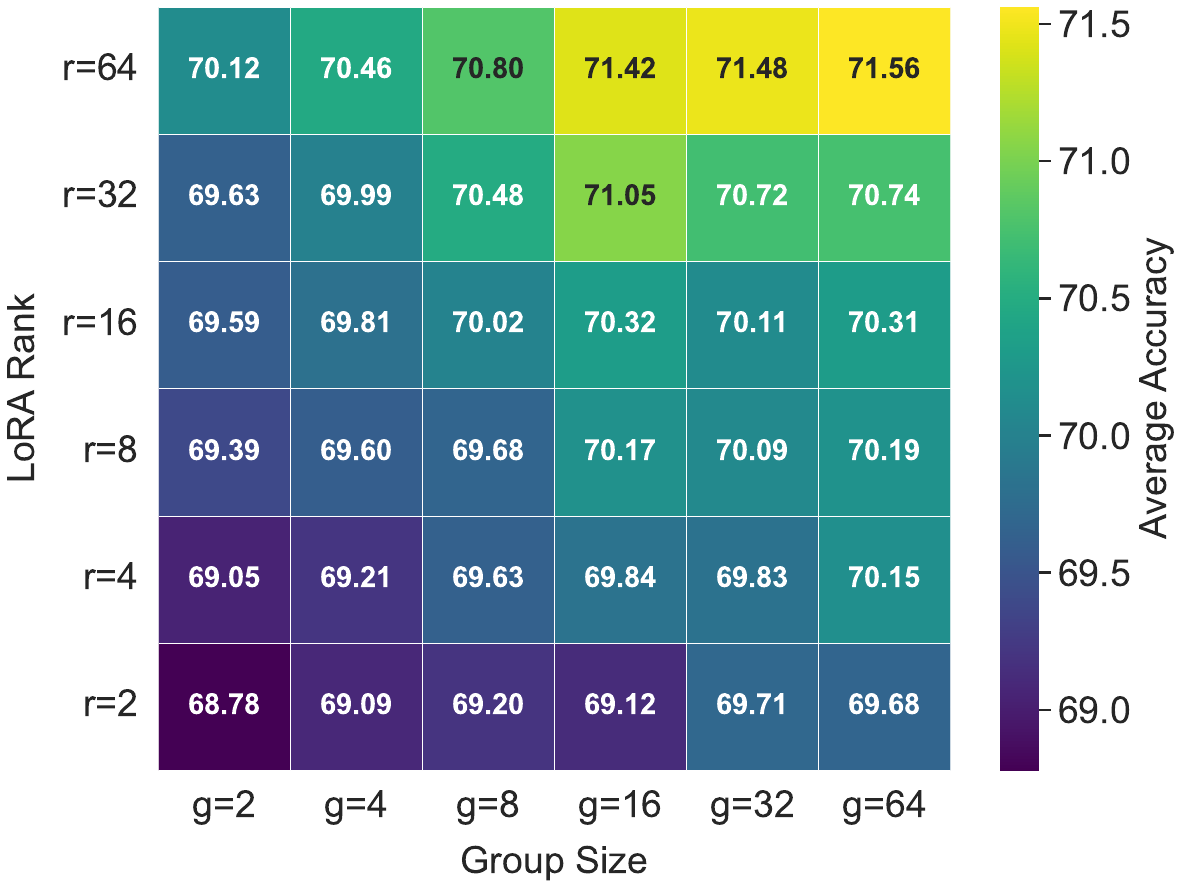}
    \end{subfigure}
    \hfill 
    \begin{subfigure}[b]{0.48\textwidth}
        \centering
        \includegraphics[width=\linewidth]{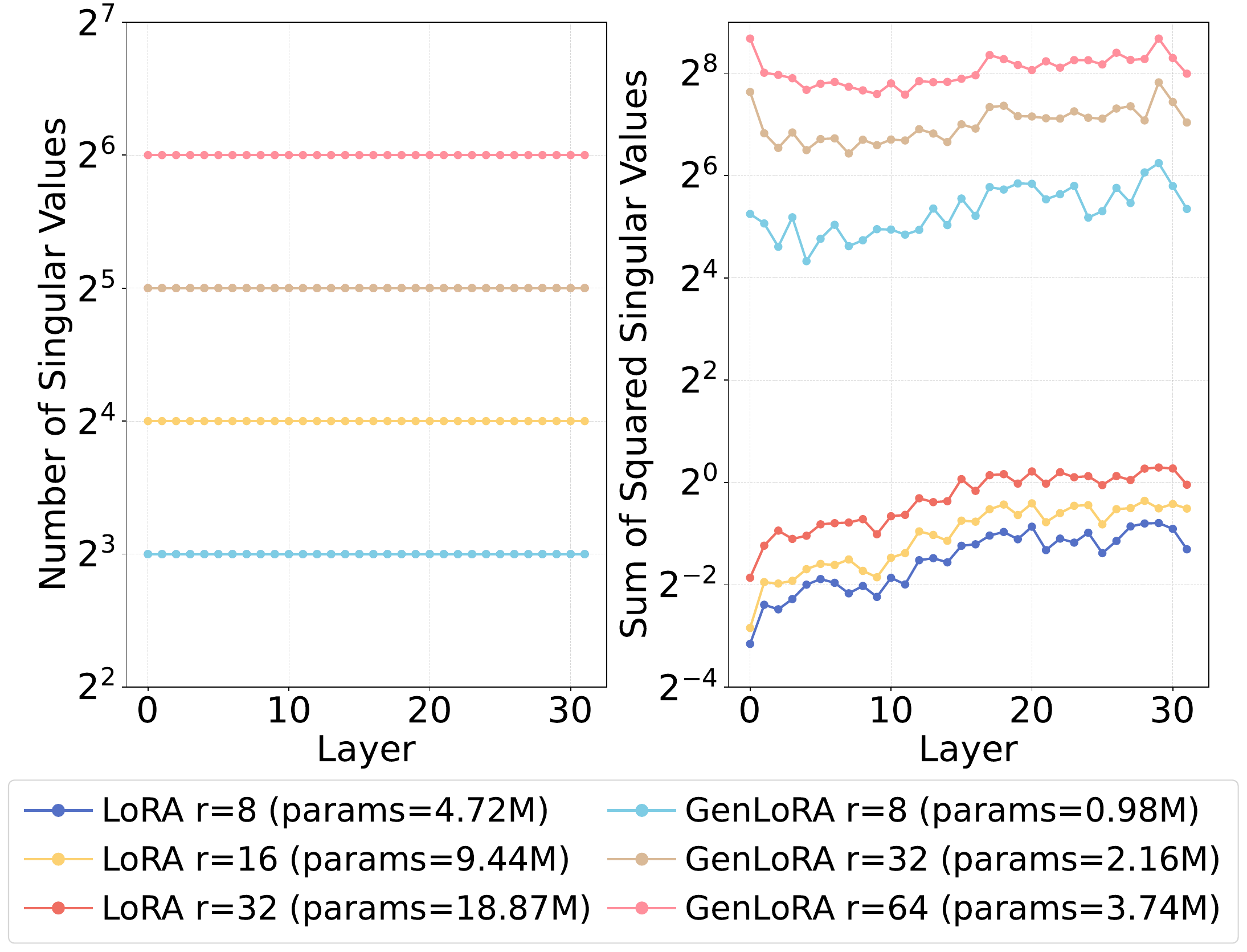}
    \end{subfigure}

    \caption{Accuracy of GenLoRA at varying ranks and group sizes \textit{\textbf{(left)}}. Singular value analysis of Query layers \textit{\textbf{(right)}}. In the ``Number of Singular Values'' plot, LoRA and GenLoRA with $r \in \{8, 32\}$ completely overlap, rendering only the GenLoRA colors visible.}
    \label{fig:genlora_analysis}
\end{figure*}

\begin{table}[t]
    \centering
    \small
    \caption{Accuracy of LoRA and GenLoRA across different tuning granularities on LLaMA3-8B.}
    \label{tab:granularity}
    \vspace{0.2cm}
    \begin{tabular}{lccccc}
        \toprule
        \textbf{Method} & \textbf{Q} & \textbf{QV} & \textbf{QKV} & \textbf{QKVUD} & \textbf{QKVOGUD} \\
        \midrule
        LoRA ($r=8$) & 64.88 & 66.54 & 67.28 & 71.19 & 72.02 \\
        GenLoRA ($r=8, g=16$) & \textbf{68.59} & \textbf{69.91} & \textbf{70.17} & \textbf{72.07} & \textbf{73.08} \\
        \bottomrule
    \end{tabular}
\end{table}

\subsection{Scalability Analysis}
We evaluate the scalability of GenLoRA on mathematical reasoning tasks using LLaMA3-8B from three perspectives: the LoRA rank, the group size, and the tuning granularity.

\paragraph{Impact of Rank and Group Size.}
We investigate the interaction between rank ($r$) and group size ($g$) by varying both within $\{2, 4, 8, 16, 32, 64\}$, as visualized in Figure~\ref{fig:genlora_analysis} \textit{\textbf{(left)}}. Generally, increasing $r$ and $g$ enhances model capacity. However, we observe that the performance gains tend to stabilize around $g=16$; further increasing the group size does not yield significant improvements. We believe that this is because the fitting capability of the non-linear functions has approached its limit.
\paragraph{Impact of Tuning Granularity.}
Finally, we evaluate the scalability of GenLoRA under different tuning granularities. We introduce five distinct settings: Q, QV, QKV, QKVUD, and QKVOGUD, where Q, K, and V denote the query, key, and value projection weights, and O, G, U, and D represent the output, gate, up, and down projection weights, respectively. As shown in Table~\ref{tab:granularity}, GenLoRA consistently outperforms LoRA across all configurations. Most notably, in the most restricted setting (Q), GenLoRA surpasses LoRA by a significant margin of 3.71\%, highlighting its superior efficiency in feature extraction. With comprehensive tuning (QKVOGUD), GenLoRA scales effectively to a peak accuracy of 73.08\%, maintaining its robust lead over LoRA.

\subsection{Singular Value Analysis}

To investigate whether GenLoRA effectively utilizes its expanded rank capacity, we analyze the distribution of singular values of the weight updates in Figure~\ref{fig:genlora_analysis} \textit{\textbf{(right)}}. The left panel reveals that while both LoRA and GenLoRA exhibit effective ranks equal to their preset configurations, GenLoRA enables significantly higher rank with fewer trainable parameters. In particular, GenLoRA ($r=64$) attains a full effective rank of 64 using only 3.74M parameters, representing an increase in capacity $8\times$ over LoRA with a higher parameter ($r=8$, 4.72M). This confirms that the synthesized ranks are non-degenerate and contribute to enhancing model capacity.
Furthermore, the sum of squared singular values  measures the total contribution of all singular modes of the weight matrices. It reflects how much feature variation the matrices can express and corroborates this observation.
GenLoRA exhibits a substantially higher sum of squared singular values than LoRA across all layers. Notably, GenLoRA ($r=8$) displays significantly higher energy than LoRA ($r=32$), demonstrating GenLoRA captures more significant feature variations with a fraction of parameters.

\section{Conclusion}
In this paper, we analyze the parameter efficiency of LoRA from the perspective of basis vector redundancy. Our analysis reveals that standard LoRA suffers from inefficiency due to the explicit storage of high-dimensional vectors. To address this, we propose Generative Low-Rank Adapter (GenLoRA), which replaces explicit matrix storage with synthesis. GenLoRA constructs basis vectors via lightweight Radial Basis Functions (RBFs) from shared latent vectors, thereby enabling high-rank adaptation with minimal parameter overhead. Extensive experiments demonstrate that GenLoRA outperforms LoRA and its variants across diverse tasks. Notably, GenLoRA achieves a 2--4\% improvement on natural language generation and 6--8\% on code generation, while utilizing significantly fewer parameters.

\small
\bibliographystyle{unsrt} 
\bibliography{myrefs.bib} 








\appendix

\section{Proof of Rank Boundedness}
\label{app:proof_prop1}

In this appendix, we provide the theoretical analysis for Proposition~\ref{prop:rank_bound}. We formally demonstrate that the algebraic structure of the GenLoRA update, constructed via generator functions, strictly adheres to the low-rank constraint.

\vspace{0.5em}
\noindent \textbf{Proposition~\ref{prop:rank_bound} (Restated).} \textit{
Let $\Delta W_{\text{Gen}} \in \mathbb{R}^{m \times n}$ be the weight update generated by GenLoRA with $r$ pairs of generator functions $\{f^B_i\}_{i=1}^r$ and $\{f^A_i\}_{i=1}^r$, where $r \ll \min(m, n)$. The rank of the update matrix satisfies:
\begin{equation}
\operatorname{rank}(\Delta W_{\text{Gen}}) \leq r.
\end{equation}
}

\vspace{0.5em}
First, we establish the algebraic form of the update. Given latent vectors $Z_B, Z_A$ and generator parameters $\Theta$, the instantiated basis vectors for $i \in \{1, \dots, r\}$ are $b_i = f^{B}_i(Z_B) \in \mathbb{R}^{m}$ and $a_i = f^{A}_i(Z_A) \in \mathbb{R}^{n}$. The GenLoRA update is defined as the summation of outer products: $\Delta W_{\text{Gen}} = \sum_{i=1}^{r} b_i a_i^\top$.

\vspace{0.5em}
\begin{lemma}
\label{A.1}
    The summation of outer products $\sum_{i=1}^{r} b_i a_i^\top$ is algebraically equivalent to the matrix product $BA$, where $B = [b_1, \dots, b_r]$ and $A = [a_1, \dots, a_r]^\top$.
\end{lemma}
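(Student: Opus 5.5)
The plan is to check the identity entrywise, since two $m \times n$ matrices are equal exactly when all their entries agree. First we fix the shapes. $B = [b_1, \dots, b_r] \in \mathbb{R}^{m\times r}$ has $(j,i)$ entry $(b_i)_j$. $A = [a_1,\dots,a_r]^\top \in \mathbb{R}^{r\times n}$ has $(i,k)$ entry $(a_i)_k$, because its $i$-th row is $a_i^\top$. By construction in Eq.~\eqref{eq:genlora-ba}, these are exactly the matrices formed by stacking the generated basis vectors $b_i = f^B_i(Z_B)$ and $a_i = f^A_i(Z_A)$. For the purpose of this lemma, the generators only supply fixed vectors once $Z_B, Z_A, \Theta$ are instantiated, so the nonlinearity plays no role.

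Next we compute both sides at an arbitrary index pair $(j,k)$ with $1\le j\le m$ and $1\le k\le n$. By the definition of matrix multiplication,
\begin{equation*}
(BA)_{jk} = \sum_{i=1}^{r} B_{ji} A_{ik} = \sum_{i=1}^{r} (b_i)_j (a_i)_k .
\end{equation*}
On the other side, each outer product $b_i a_i^\top$ is the $m\times n$ matrix with $(j,k)$ entry $(b_i)_j(a_i)_k$. Matrix addition is entrywise, so
\begin{equation*}
\Bigl(\sum_{i=1}^r b_i a_i^\top\Bigr)_{jk} = \sum_{i=1}^r (b_i)_j (a_i)_k .
\end{equation*}
The two expressions coincide for every $(j,k)$, which proves the lemma. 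An equivalent route is the column--row (block) expansion of the product: $BA = \sum_i B_{:,i} A_{i,:}$.

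There is no genuine obstacle here. The only point needing care is the orientation convention: the $a_i$ must be placed as \emph{rows} of $A$, via the transpose in Eq.~\eqref{eq:genlora-ba}, so that the inner index $i$ is contracted correctly.

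This lemma then serves Proposition~\ref{prop:rank_bound}. Since $B$ has only $r$ columns, $\operatorname{rank}(\Delta W_{\text{Gen}}) = \operatorname{rank}(BA)\le \operatorname{rank}(B)\le r$. In the main argument I would state that rank inequality right after the lemma.
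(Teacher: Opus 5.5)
Your proof is correct and follows the paper's argument essentially verbatim: both verify the identity entrywise, using $(B)_{ji}=(b_i)_j$ and $(A)_{ik}=(a_i)_k$ to get $(BA)_{jk}=\sum_{i=1}^r (b_i)_j(a_i)_k$, and then matching this with the $(j,k)$ entry of $\sum_i b_i a_i^\top$. Your closing rank remark also matches the paper's Lemma~\ref{A.2}, and your bound $\operatorname{rank}(B)\le r$ is slightly cleaner than the paper's, since it does not need the assumption $r<m$.
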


\begin{proof}
We verify this equality using element-wise index notation. Let $(M)_{jk}$ denote the entry of a matrix $M$ at the $j$-th row and $k$-th column.
Compute the element $(BA)_{jk}$:
\begin{equation}
(BA)_{jk} = \sum_{p=1}^{r} (B)_{jp} (A)_{pk}.
\end{equation}
By construction, the $p$-th column of $B$ is $b_p$, so $(B)_{jp} = (b_p)_j$. The $p$-th row of $A$ is $a_p^\top$, so $(A)_{pk} = (a_p)_k$. Substituting these terms:
\begin{equation}
(BA)_{jk} = \sum_{p=1}^{r} (b_p)_j (a_p)_k.
\end{equation}
Now, compute the element of the summation term. The $(j, k)$-th entry of the outer product $b_i a_i^\top$ is $(b_i)_j (a_i)_k$. Summing over $i=1$ to $r$:
\begin{equation}
\left( \sum_{i=1}^{r} b_i a_i^\top \right)_{jk} = \sum_{i=1}^{r} (b_i)_j (a_i)_k.
\end{equation}
Since the element-wise expressions are identical for all $j, k$, it follows that $\Delta W_{\text{Gen}} = B A$.
\end{proof}

\vspace{0.5em}
\begin{lemma}
\label{A.2}
  The rank of the factorized matrix product $BA$ is bounded by $r$.  
\end{lemma}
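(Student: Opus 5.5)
The plan is to bound the rank of $BA$ through the dimension of its column space, using the factorization established in Lemma~\ref{A.1}. Here $B \in \mathbb{R}^{m \times r}$ and $A \in \mathbb{R}^{r \times n}$, so the shared inner dimension is $r$. The key observation is that every column of $BA$ is obtained by applying $B$ to a vector in $\mathbb{R}^r$. Concretely, the $k$-th column of $BA$ equals $B\,(A)_{:,k} = \sum_{p=1}^{r} (A)_{pk}\, b_p$. Hence every column of $BA$ is a linear combination of the $r$ vectors $b_1, \dots, b_r$.

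The steps, in order, are as follows.

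First, I write the $k$-th column of $BA$ in the form above. This is immediate from the entrywise formula $(BA)_{jk} = \sum_{p}(B)_{jp}(A)_{pk}$ already used in the proof of Lemma~\ref{A.1}.

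Second, I conclude that $\operatorname{col}(BA) \subseteq \operatorname{span}\{b_1,\dots,b_r\} = \operatorname{col}(B)$.

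Third, I note that a subspace spanned by $r$ vectors has dimension at most $r$. This gives
\begin{equation}
\operatorname{rank}(BA) = \dim \operatorname{col}(BA) \le \dim \operatorname{col}(B) \le r.
\end{equation}

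Symmetrically, the same argument applied to the rows would give $\operatorname{rank}(BA) \le \operatorname{rank}(A) \le r$. Only one direction is needed here.

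There is no genuine obstacle in this argument. The only point deserving care is to stress that the bound holds for \emph{any} vectors $b_i, a_i$. The nonlinearity of the generators $f_i^B, f_i^A$ and the normalization play no role: once the latent vectors and the generator parameters $\Theta$ are fixed, $B$ and $A$ are simply fixed real matrices of inner dimension $r$. Combining this lemma with Lemma~\ref{A.1} then yields Proposition~\ref{prop:rank_bound} directly. Equality need not hold, for instance if the generated $b_i$ happen to be linearly dependent, so the statement is only an upper bound.
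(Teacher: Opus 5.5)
Your proof is correct and follows essentially the same route as the paper: both show $\operatorname{col}(BA) \subseteq \operatorname{col}(B)$ and then bound $\dim \operatorname{col}(B)$ by the number of columns $r$. The paper argues via $BAv = B(Av)$ with $Av \in \mathbb{R}^r$, while you write each column explicitly as $\sum_p (A)_{pk} b_p$, and you rightly note that no assumption $r < m$ is needed, since $\min(m,r) \le r$ always.
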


\begin{proof}
Recall that the rank of a matrix corresponds to the dimension of its column space. The column space of the product $BA$, denoted $\mathcal{C}(BA)$, is defined as:
\begin{equation}
\mathcal{C}(BA) = \{ B(Av) \mid v \in \mathbb{R}^n \}.
\end{equation}
Since $Av$ is a vector in $\mathbb{R}^r$, let $u = Av$. Then $\mathcal{C}(BA) \subseteq \{ Bu \mid u \in \mathbb{R}^r \} = \mathcal{C}(B)$.
Consequently, the dimension of the column space of $BA$ cannot exceed the dimension of the column space of $B$:
\begin{equation}
\operatorname{rank}(BA) \leq \operatorname{rank}(B).
\end{equation}
The matrix $B \in \mathbb{R}^{m \times r}$ has exactly $r$ columns. The rank of a matrix is bounded by the number of its columns, so $\operatorname{rank}(B) \leq \min(m, r)$. Given the low-rank setting where $r < m$, we have $\operatorname{rank}(B) \leq r$.
Therefore, $\operatorname{rank}(\Delta W_{\text{Gen}}) = \operatorname{rank}(BA) \leq r$.
\end{proof}

\vspace{0.5em}
\noindent \textbf{Conclusion.}
From Lemma \ref{A.1}, the update $\Delta W_{\text{Gen}}$ is equivalent to the matrix product $BA$. From Lemma \ref{A.2}, the rank of this product is at most $r$. Thus, Proposition~\ref{prop:rank_bound} holds.

\section{Proof of Gradient Boundedness}
\label{app:proof_grad_bound}

In this appendix, we provide the complete theoretical analysis for Proposition~\ref{prop:grad_bound}. We demonstrate that the gradients of the loss function with respect to both the latent vectors and the generator parameters are strictly bounded.

\vspace{0.5em}
\noindent \textbf{Proposition~\ref{prop:grad_bound} (Restated).} \textit{
In GenLoRA, the gradients of the loss function $\mathcal{L}$ with respect to the
latent vectors $Z_A, Z_B$ and generator weights $\Theta_A, \Theta_B$ are locally bounded.
That is, for any bounded parameter set, there exist constants $C_1, C_2 > 0$
such that
\begin{equation}
\left\| \frac{\partial \mathcal{L}}{\partial Z} \right\| < C_1,
\qquad
\left\| \frac{\partial \mathcal{L}}{\partial \Theta} \right\| < C_2.
\end{equation}
}

\vspace{0.5em}
\begin{lemma}
The gradients of the loss function with respect to the generator weights $\Theta$ are locally bounded.
\end{lemma}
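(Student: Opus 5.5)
The approach is a chain-rule argument through the matrices $B$ and $A$: factor $\partial\mathcal{L}/\partial\Theta$ into an upstream gradient times the Jacobian of each generator with respect to its own weights, and bound each factor on the bounded parameter set. Since $\Delta W_{\text{Gen}}=BA$ by Lemma~\ref{A.1}, we have
\begin{equation}
\frac{\partial \mathcal{L}}{\partial B} = G A^\top, \qquad \frac{\partial \mathcal{L}}{\partial A} = B^\top G, \qquad G := \frac{\partial \mathcal{L}}{\partial \Delta W_{\text{Gen}}}.
\end{equation}
I will assume, as is standard for a local statement, that $\mathcal{L}$ is continuously differentiable in the adapted weights $W_0+\Delta W_{\text{Gen}}$ for a fixed batch. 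It then suffices to show that $A$ and $B$ are bounded and that $\Delta W_{\text{Gen}}$ ranges over a bounded set, so that $\|G\|$ is bounded by continuity on a compact set.

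First I would bound the generator outputs. Each entry of a generated vector has the form
\begin{equation}
w_b\,\sigma(\hat x) + \sum_{k=1}^K w_k \varphi_k(\hat x),
\end{equation}
and each factor is controlled as follows. Each Gaussian satisfies $0<\varphi_k\le 1$ for every real input. The normalized inputs satisfy $|\hat x|\le \sqrt{d_g}$ whenever $\varepsilon\ge 0$, because a component of a standardized vector of length $d_g$ deviates from the mean by at most $\sqrt{d_g}$ standard deviations. Hence $|\sigma(\hat x)|\le \sqrt{d_g}$ for SiLU. Therefore every entry of $b_i$ and $a_i$ is at most $|w_b|\sqrt{d_g}+\sum_k|w_k|$, which is bounded uniformly on a bounded set of $\Theta$, independently of $Z$. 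This makes $\|A\|$, $\|B\|$ and $\|\Delta W_{\text{Gen}}\|$ bounded, and hence $\|G\|$ is bounded as well.

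Second, I would bound the Jacobians of the generator outputs with respect to $\Theta$. These are explicit and elementary:
\begin{equation}
\frac{\partial \phi}{\partial w_k} = \varphi_k(\hat x)\in(0,1], \qquad \frac{\partial \phi}{\partial w_b} = \sigma(\hat x), \qquad |\sigma(\hat x)|\le \sqrt{d_g}.
\end{equation}
Each parameter affects only one block of one row of $A$ or column of $B$, so the Jacobian is sparse with entries bounded by $\max(1,\sqrt{d_g})$. Chaining the two steps gives
\begin{equation}
\Bigl\|\frac{\partial\mathcal{L}}{\partial\Theta_B}\Bigr\| \le \|G\|\,\|A\|\cdot \sqrt{m}\,\max(1,\sqrt{d_g}),
\end{equation}
and symmetrically for $\Theta_A$. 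This yields the constant $C_2$. Note that this bound is uniform in $Z$, which is stronger than needed.

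The main obstacle is controlling the upstream factor $G$ rigorously, since $\mathcal{L}$ is a deep network's loss and no global Lipschitz constant is available. I would handle this only locally: boundedness of $\Delta W_{\text{Gen}}$ confines the frozen-plus-adapter weights to a compact set, and continuity of $\nabla\mathcal{L}$ then gives a finite supremum. A secondary subtlety is the normalization when $\sigma_g=0$. The $\Theta$-Jacobian does not see this degeneracy, and the bound $|\hat x|\le\sqrt{d_g}$ still holds trivially because $\hat x=0$ in that case. I would flag, however, that it matters for the companion bound on $\partial\mathcal{L}/\partial Z$, where $\varepsilon>0$ is needed.
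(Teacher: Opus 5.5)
Your proof is correct and follows the paper's chain-rule skeleton: the upstream gradient $\partial\mathcal{L}/\partial\Delta W_{\text{Gen}}$, followed by $\partial\phi/\partial w_k=\varphi_k(\hat x)\in(0,1]$ and $\partial\phi/\partial w_b=\sigma(\hat x)$. Where it differs is in closing the two places where the paper's argument is soft.

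First, the paper concedes that $\sigma(\hat x)$ is unbounded and appeals to the gradient being ``evaluated at a fixed $\hat x$,'' which on its own gives only pointwise finiteness. You instead use the instance-wise normalization to get $|\hat x|\le\sqrt{d_g}$, hence $|\sigma(\hat x)|\le\sqrt{d_g}$ for every latent input.

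Second, the paper simply assumes $\partial\mathcal{L}/\partial\Delta W_{\text{Gen}}$ is bounded. It also never bounds the middle factor $\partial\Delta W_{\text{Gen}}/\partial\phi$, which is $A$ or $B$. You bound the generator outputs uniformly on bounded $\Theta$, and then obtain a bound on $\|G\|$ from a $C^1$ assumption plus compactness.

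Your version therefore yields a constant that is uniform in the latent vectors $Z$ and rests only on smoothness of the loss, not on an assumed gradient bound. The paper's version is shorter but leaves both points as assumptions.

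One cosmetic remark: your explicit constant $\sqrt{m}\max(1,\sqrt{d_g})$ is stated rather than derived. A blockwise Cauchy--Schwarz, using $\|\varphi_k(\hat{\mathbf{x}}_g)\|_2\le\sqrt{d_g}$ and $\|\sigma(\hat{\mathbf{x}}_g)\|_2\le\|\hat{\mathbf{x}}_g\|_2\le\sqrt{d_g}$, gives $\|\partial\mathcal{L}/\partial\Theta_B\|\le\sqrt{(K+1)d_g}\,\|GA^\top\|_F$. Only the existence of $C_2$ is needed, though, so this does not affect the argument.
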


\begin{proof}
The parameter set $\Theta$ consists of the RBF weights $\{w_k\}$ and the base linear weight $w_{\text{b}}$. We analyze them respectively.

For $w_k$, by the chain rule,
\[
\frac{\partial \mathcal{L}}{\partial w_k}
=
\frac{\partial \mathcal{L}}{\partial \Delta W_{\text{Gen}}}
\cdot
\frac{\partial \Delta W_{\text{Gen}}}{\partial \phi}
\cdot
\frac{\partial \phi(\hat{x})}{\partial w_k}.
\]
Since $\frac{\partial \phi(\hat{x})}{\partial w_k} = \varphi_k(\hat{x}) = \exp\!\left(-\left(\frac{\hat{x}-\mu_k}{h}\right)^2\right)$ and the Gaussian basis satisfies $0 < \varphi_k(\hat{x}) \le 1$ for all $\hat{x} \in \mathbb{R}$, this term is uniformly bounded.

For $w_{\text{b}}$,
\[
\frac{\partial \phi(\hat{x})}{\partial w_{\text{base}}} = \sigma(\hat{x}),
\]
where $\sigma(\cdot)$ denotes the SiLU activation. Although $\sigma(\hat{x})$ is
unbounded, the gradient with respect to $w_{\mathrm{base}}$ is evaluated at a
fixed $\hat{x}$ during backpropagation. Assuming the loss gradient
$\frac{\partial \mathcal{L}}{\partial \Delta W_{\mathrm{Gen}}}$ is bounded, which
holds for standard losses and finite network outputs, it follows that for any
bounded parameter set $\Theta\in\mathcal{B}$, both
$\frac{\partial \mathcal{L}}{\partial w_k}$ and
$\frac{\partial \mathcal{L}}{\partial w_{\mathrm{base}}}$ are bounded.

Therefore, the gradient $\frac{\partial \mathcal{L}}{\partial \Theta}$ is locally
bounded.
\end{proof}

\begin{lemma}
The gradients of the loss function with respect to the latent vectors $Z$ are
locally bounded.
\end{lemma}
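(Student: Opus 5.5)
The argument is a chain-rule decomposition through the generator pipeline $Z \mapsto \mathbf{x}_g \mapsto \hat{\mathbf{x}}_g \mapsto \phi_{i,g} \mapsto (B,A) \mapsto \Delta W_{\text{Gen}} \mapsto \mathcal{L}$. Each Jacobian factor will be shown to be continuous in the parameters and inputs. Local boundedness then follows on any compact (closed, bounded) parameter set.

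As in the previous lemma, we assume $G := \partial\mathcal{L}/\partial \Delta W_{\text{Gen}}$ is bounded on bounded parameter sets. With Lemma~\ref{A.1} ($\Delta W_{\text{Gen}} = BA$) this gives
\[
\frac{\partial \mathcal{L}}{\partial B} = G A^\top, \qquad \frac{\partial \mathcal{L}}{\partial A} = B^\top G .
\]
$A$ and $B$ are bounded because every entry is $w_b\sigma(\hat x) + \sum_k w_k \varphi_k(\hat x)$. Here $|\varphi_k| \le 1$, and $|\sigma(\hat x)| \le |\hat x| \le \sqrt{d_g}$: each coordinate of a standardized vector of length $d_g$ obeys $|\hat x_j| \le \sqrt{d_g}$, since $\sum_j \hat x_j^2 \le d_g$. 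So $\partial\mathcal{L}/\partial b_i$ and $\partial\mathcal{L}/\partial a_i$ are bounded.

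Next, bound the derivative of each $\phi_{i,g}$ with respect to $\hat{\mathbf{x}}_g$. The SiLU derivative is bounded by a constant (about $1.1$). The Gaussian derivative is
\[
\varphi_k'(\hat x) = -\tfrac{2(\hat x - \mu_k)}{h^2}\varphi_k(\hat x),
\]
whose absolute value is at most $\sqrt{2}/(h e^{1/2})$. Hence
\[
|\partial \phi / \partial \hat x| \le |w_b| c_\sigma + \sum_k |w_k| \sqrt{2}/(h\sqrt e),
\]
which is bounded on bounded $\Theta$. The map is element-wise within each group, so this Jacobian is diagonal.

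The main obstacle is the normalization Jacobian $\partial \hat{\mathbf{x}}_g/\partial \mathbf{x}_g$. Writing $P = I - \tfrac{1}{d_g}\mathbf{1}\mathbf{1}^\top$, one computes
\[
\frac{\partial \hat{\mathbf{x}}_g}{\partial \mathbf{x}_g} = \frac{1}{\sigma_g+\varepsilon}\Big(P - \frac{(\mathbf{x}_g-\mu_g)(\mathbf{x}_g-\mu_g)^\top}{d_g\,\sigma_g(\sigma_g+\varepsilon)}\Big).
\]
I would bound its operator norm by $2/(\sigma_g+\varepsilon) \le 2/\varepsilon$, because $\|\mathbf{x}_g-\mu_g\|^2 = d_g\sigma_g^2$. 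This bound is uniform and does not even need boundedness of $Z$, so $\varepsilon>0$ is exactly what removes the singularity at $\sigma_g = 0$. At $\sigma_g = 0$ itself the derivative of $\sigma_g$ is undefined. I would handle this by treating it as a measure-zero set, or by using the smoothed $\sqrt{\mathrm{Var}+\varepsilon'}$ that implementations use, which makes the bound $1/\sqrt{\varepsilon'}$ hold everywhere.

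Finally, concatenation is a coordinate projection with norm $1$. Multiplying the factors and summing over the $r$ generators and $G$ groups gives
\[
\|\partial\mathcal{L}/\partial Z_B\| \le \sum_i \|\partial \mathcal{L}/\partial b_i\| \cdot \max_g \|J_{\phi}\| \cdot 2/\varepsilon ,
\]
which is finite. The same holds for $Z_A$. Taking $C_1$ as the maximum of the two bounds, combined with the previous lemma for $C_2$, proves Proposition~\ref{prop:grad_bound}.
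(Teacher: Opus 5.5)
Your proposal is correct and follows the same chain-rule route as the paper: the same factorization, the same bounded SiLU and Gaussian derivatives, and the same explicit normalization Jacobian. It also tightens two points the paper leaves loose: you actually bound the factor $\partial\Delta W_{\text{Gen}}/\partial\phi$ through $|\hat x_j|\le\sqrt{d_g}$, and your uniform bound $2/(\sigma_g+\varepsilon)\le 2/\varepsilon$, obtained from $\|\mathbf{x}_g-\mu_g\|^2=d_g\sigma_g^2$, replaces the paper's appeal to ``finite second moments.''

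One small correction: at $\sigma_g=0$ you need neither a measure-zero argument (which would be awkward anyway, since $Z_B$ is zero-initialized) nor a smoothed variance. The composite $\hat{\mathbf{x}}_g = P\mathbf{x}_g/(\|P\mathbf{x}_g\|/\sqrt{d_g}+\varepsilon)$ is already differentiable there, with Jacobian $P/\varepsilon$, which matches the limit of your formula.
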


\begin{proof}
Let $z$ be an element of the latent vector $Z$. Compute the gradient $\frac{\partial \mathcal{L}}{\partial z}$:
\begin{equation}
\frac{\partial \mathcal{L}}{\partial z} = \frac{\partial \mathcal{L}}{\partial \Delta W_{\text{Gen}}} \cdot \frac{\partial \Delta W_{\text{Gen}}}{\partial \phi} \cdot \frac{\partial \phi(\hat{x})}{\partial \hat{x}} \cdot \frac{\partial \hat{x}}{\partial z}.
\end{equation}
We analyze the partial derivatives $\frac{\partial \phi}{\partial \hat{x}}$ and $\frac{\partial \hat{x}}{\partial z}$ in turn.

Compute $\frac{\partial \phi(\hat{x})}{\partial \hat{x}}$:
\begin{equation}
\frac{\partial \phi(\hat{x})}{\partial \hat{x}}
=
w_{\mathrm{base}} \sigma'(\hat{x})
+
\sum_{k=1}^{K} w_k \varphi'_k(\hat{x}).
\end{equation}
The derivative of the SiLU activation $\sigma'(t)$ is globally bounded.
The derivative of the Gaussian basis is given by
$\varphi'_k(\hat{x}) = -2 \frac{(\hat{x}-\mu_k)}{h^2} \varphi_k(\hat{x})$. By defining $u = \frac{\hat{x}-\mu_k}{h}$, we have $\varphi'_k(\hat{x}) = -\frac{2}{h} g(u)$, where the function $g(u)=u e^{-u^2}$ is globally bounded.
Therefore, both $\sigma'(\cdot)$ and $\varphi'_k(\cdot)$ are bounded functions. Conditioning on finite generator parameters $(w_{\mathrm{base}}, \{w_k\})$ at the current optimization step, $\frac{\partial \phi(\hat{x})}{\partial \hat{x}}$ is bounded as a finite linear combination of bounded functions.

Compute $\frac{\partial \hat{x}}{\partial z}$:
From Eq.~\eqref{eq:norm}, the instance-wise normalized variable is defined as
\begin{equation}
\hat{x}_i = \frac{z_i - \mu}{\sigma + \varepsilon},
\end{equation}
where
\begin{equation}
\mu = \frac{1}{n}\sum_{k=1}^{n} z_k,
\qquad
\sigma = \sqrt{\frac{1}{n}\sum_{k=1}^{n}(z_k-\mu)^2},
\end{equation}
both of which depend on the latent vector $z$.

We rewrite $\hat{x}_i$ as a product
\begin{equation}
\hat{x}_i = (z_i-\mu)(\sigma+\varepsilon)^{-1},
\end{equation}
and apply the chain rule with respect to $z_j$:
\begin{equation}
\frac{\partial \hat{x}_i}{\partial z_j}
=
\frac{1}{\sigma+\varepsilon}
\frac{\partial (z_i-\mu)}{\partial z_j}
+
(z_i-\mu)
\frac{\partial (\sigma+\varepsilon)^{-1}}{\partial z_j}.
\end{equation}

We first compute the derivative of the numerator. Since
\(
\mu = \frac{1}{n}\sum_{k=1}^{n} z_k
\),
it follows that
\begin{equation}
\frac{\partial (z_i-\mu)}{\partial z_j}
=
\delta_{ij} - \frac{1}{n},
\end{equation}
where $\delta_{ij}$ denotes the Kronecker delta.

Next, we compute the derivative of the inverse standard deviation term. Let
\(
\sigma^2 = \frac{1}{n}\sum_{k=1}^{n}(z_k-\mu)^2
\).
Differentiating with respect to $z_j$ yields
\begin{equation}
\frac{\partial \sigma^2}{\partial z_j}
=
\frac{2}{n}(z_j-\mu),
\end{equation}
and thus
\begin{equation}
\frac{\partial \sigma}{\partial z_j}
=
\frac{z_j-\mu}{n\sigma}.
\end{equation}
Applying the chain rule again, we obtain
\begin{equation}
\frac{\partial (\sigma+\varepsilon)^{-1}}{\partial z_j}
=
-\frac{1}{(\sigma+\varepsilon)^2}
\frac{\partial \sigma}{\partial z_j}
=
-\frac{z_j-\mu}{n\sigma(\sigma+\varepsilon)^2}.
\end{equation}

Substituting the above results back, the Jacobian element is given by
\begin{equation}
\frac{\partial \hat{x}_i}{\partial z_j}
=
\frac{1}{\sigma+\varepsilon}
\left(
\delta_{ij}
-
\frac{1}{n}
-
\frac{(z_i-\mu)(z_j-\mu)}{n\sigma(\sigma+\varepsilon)}
\right).
\end{equation}

Due to the presence of the stability constant $\varepsilon>0$ and the assumption that the latent vector $z$ has finite second moments, each term in the above expression is bounded. Therefore, the operator norm of the Jacobian $\frac{\partial \hat{x}}{\partial z}$ is bounded, and by the chain rule, $\frac{\partial \mathcal{L}}{\partial z}$ is bounded as it is a product of bounded terms, and there exists a constant $C_1>0$ such that, for any bounded parameter set, $\left\| \frac{\partial \mathcal{L}}{\partial z} \right\| < C_1$.

\end{proof}

\section{Algorithmic Complexity Analysis}
\label{app:complexity}

In this section, we provide a theoretical comparison between Radial Basis Functions (RBFs) and B-Spline from an algorithmic perspective. We analyze why RBFs constitute a more efficient choice for the high-dimensional generation tasks in GenLoRA.

\subsection{Computational Operations}

\paragraph{RBFs: Single-Step Global Mapping}
The RBF generator models nonlinearity via a unified, single-step calculation that is inherently parallelizable. For a given input $x$ and a grid resolution $N$ (denoting the number of basis functions), the algorithm executes three fundamental operations: distance calculation ($(x - \mu_k)^2$), Gaussian activation ($\exp(\cdot)$), and weighted aggregation. Crucially, these operations are strictly independent across the grid dimension $N$, meaning there are no data dependencies between the $k$-th and $(k+1)$-th basis functions. Consequently, the entire process can be fused and expressed as a single vectorized matrix operation. In terms of sequential complexity, this represents an $\mathcal{O}(1)$ depth algorithm, regardless of the grid size, making it exceptionally efficient on modern SIMD architectures like GPUs.

\paragraph{B-Spline: Recursive Dependency}
B-Splines are defined by the Cox-de Boor recursion algorithm, which inherently requires an iterative computation process. A spline of order $k$ (degree $k-1$) is computed by recursively evaluating basis functions from order $0$ up to $k$. This process begins with an initialization step to locate the input $x$ within the knot vector intervals. Subsequently, the algorithm enters a recursive loop of depth $k$, where at each depth $d \in [1, k]$, the value of a basis function is derived via linear interpolation of two basis functions from the previous depth $d-1$. While the total operation count scales as $\mathcal{O}(k^2)$, the primary computational bottleneck is the requirement for $\mathcal{O}(k)$ sequential iterations to resolve these dependencies. Unlike RBFs, B-Splines impose strict sequential constraints: the basis values at step $d$ cannot be computed until step $d-1$ is fully resolved, which significantly limits the potential for operator fusion and parallel execution.

\subsection{Space and Gradient Complexity}

In deep learning, the training memory cost is dominated by the requirement to store intermediate activations for gradient computation. For RBF generators, the computational graph is shallow, necessitating the backpropagation engine to trace through only a single exponential layer. Consequently, the space complexity remains linear with respect to the grid resolution $N$, i.e., $\mathcal{O}(N)$. In contrast, B-Spline generators construct an inherently deep computational graph due to their recursive definition. Since the algorithm iterates $k$ times, the automatic differentiation engine is forced to cache intermediate basis tensors for every step of the recursion to correctly apply the chain rule. This results in a space complexity of $\mathcal{O}(N \cdot k)$. For high-dimensional inputs, this multiplicative factor $k$ significantly amplifies the memory footprint during training, making B-Splines less memory-efficient than RBFs.

\subsection{Summary}

Table~\ref{tab:algo_comparison} summarizes the algorithmic differences. While B-Splines offer compact support, their recursive nature and higher memory complexity for gradient computation make them less suitable for the high-throughput requirements of GenLoRA. RBFs provide a mathematically simpler and parallel-friendly alternative that aligns better with the efficiency demands of large-scale neural network training.

\begin{table}[h]
\centering
\caption{Algorithmic comparison between RBF and B-Spline generators.}
\label{tab:algo_comparison}
\resizebox{0.55\columnwidth}{!}{
\begin{tabular}{l|cc}
\toprule
\textbf{Aspect} & \textbf{RBF (Ours)} & \textbf{B-Spline} \\
\midrule
\textbf{Structure} & Single-step Mapping & $k$-step Recursion \\
\textbf{Dependencies} & Independent (Parallel) & Sequential (Iterative) \\
\textbf{Gradient Memory} & $\mathcal{O}(N)$ & $\mathcal{O}(N \cdot k)$ \\
\bottomrule
\end{tabular}
}
\end{table}

\section{Approximation Capability of RBF Generators}
\label{app:rbf_theory}

In this section, we provide the theoretical justification for the strong fitting capability of our RBF-based generators, drawing upon the fundamental mathematical properties of radial basis functions analyzed by \citet{buhmann2000radial}.

\subsection{Universal Approximation Theorem}
\label{app:uat}

The theoretical foundation of our RBF generator is grounded in the analysis provided by \citet{park1991universal}. We first cite the original Theorem 2 regarding the approximation capabilities of RBF networks:

\begin{theorem}[Theorem 2 in \citet{park1991universal}]
\label{thm:uat_original}
Let $K: \mathbb{R}^r \to \mathbb{R}$ be an integrable bounded function such that $K$ is continuous almost everywhere and $\int_{\mathbb{R}^r} K(x)dx \neq 0$. Then the family $S_K$ is dense in $C(\mathbb{R}^r)$ with respect to the metric $d$ defined by
\begin{equation}
d(f,g) = \sum_{n=1}^{\infty} 2^{-n} \frac{\| (f-g) \cdot 1_{[-n,n]^r} \|_\infty}{1 + \| (f-g) \cdot 1_{[-n,n]^r} \|_\infty}.
\end{equation}
\end{theorem}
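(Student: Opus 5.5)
The plan is the classical ``convolution followed by Riemann sum'' argument. Here $S_K$ denotes the family of finite sums $\sum_{i} w_i K\bigl((x-z_i)/\sigma\bigr)$.

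\textbf{Reductions.} The metric $d$ metrizes uniform convergence on the cubes $[-n,n]^r$. So it suffices to show: for every $f\in C(\mathbb{R}^r)$, every $n$ and every $\epsilon>0$, some $g\in S_K$ satisfies $\sup_{[-n,n]^r}|f-g|<\epsilon$. The tail $\sum_{n>N}2^{-n}$ of $d$ is then small, and a single $g$ handles $[-N,N]^r$, hence all smaller cubes. Multiplying $f$ by a continuous cutoff equal to $1$ on $[-n,n]^r$, we may assume $f$ has compact support $D$. Since $\int K\neq 0$, we rescale the weights and assume $\int K=1$.

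\textbf{Step 1 (mollification).} Set $K_\sigma(x)=\sigma^{-r}K(x/\sigma)$; note that $K_\sigma(x-y)$ is a scaled translate of $K$. Because $\int K_\sigma=1$ and $K\in L^1$, we have
\[
|f*K_\sigma(x)-f(x)|\le \int |f(x-y)-f(x)|\,|K_\sigma(y)|\,dy .
\]
Split this integral into $|y|<\eta$ and $|y|\ge\eta$. The first part is at most $\omega_f(\eta)\|K\|_1$, where $\omega_f$ is the modulus of continuity of $f$. The second part is at most $2\|f\|_\infty\int_{|u|\ge\eta/\sigma}|K(u)|\,du$. Hence $f*K_\sigma\to f$ uniformly on $\mathbb{R}^r$ as $\sigma\to 0$. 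Fix $\sigma$ with error below $\epsilon/2$.

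\textbf{Step 2 (discretization), the main obstacle.} Partition a box containing $D$ into cubes $Q_j$ of side $\delta$, with centers $y_j$ and volume $\Delta=\delta^r$, and take
\[
g(x)=\sum_j f(y_j)\,\Delta\,K_\sigma(x-y_j)\in S_K .
\]
We must bound $|f*K_\sigma(x)-g(x)|$ uniformly for $x\in[-n,n]^r$. Write the error as two sums.
\begin{itemize}
\item[(i)] $\sum_j \int_{Q_j}\bigl(f(y)-f(y_j)\bigr)K_\sigma(x-y)\,dy$. This is at most $\omega_f(\sqrt r\,\delta)\,\|K\|_1$, uniformly in $x$.
\item[(ii)] $\sum_j f(y_j)\int_{Q_j}\bigl(K_\sigma(x-y)-K_\sigma(x-y_j)\bigr)dy$. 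This is the delicate term: $K$ is only bounded and continuous a.e., and the shift by $x$ destroys any fixed alignment of the partition with the discontinuities of $K$.
\end{itemize}

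\textbf{Handling (ii).} Use the oscillation function $\operatorname{osc}_\rho K(u)=\sup\{|K(s)-K(t)|: s,t\in B(u,\rho)\}$. Term (ii) is at most $\|f\|_\infty\sum_j \Delta\,\operatorname{osc}_{Q_j\text{-shifted}}K_\sigma$. Each cell term is dominated by an integral: $\Delta\cdot \operatorname{osc}_{x-Q_j}K_\sigma\le \int_{x-Q_j}\operatorname{osc}_{2\sqrt r\delta}K_\sigma(u)\,du$. Summing over $j$ therefore gives at most $\int_{R}\operatorname{osc}_{2\sqrt r\delta}K_\sigma$, where $R$ is a fixed bounded region containing $[-n,n]^r - D$ enlarged by $2\sqrt r\,\delta$. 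This bound is independent of $x$, so the shift problem disappears. Next, $\operatorname{osc}_\rho K_\sigma\to 0$ at every continuity point of $K_\sigma$, hence almost everywhere. It is also bounded by $2\sigma^{-r}\|K\|_\infty$ on the bounded set $R$. Dominated convergence then gives $\int_R\operatorname{osc}_{2\sqrt r\delta}K_\sigma\to 0$ as $\delta\to0$. This is exactly where boundedness and a.e.\ continuity of $K$ are used, and the compact support of $f$ keeps $R$ bounded.

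Choosing $\delta$ so that (i) and (ii) together are below $\epsilon/2$ yields $\sup_{[-n,n]^r}|f-g|<\epsilon$, which proves density.
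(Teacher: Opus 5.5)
The paper contains no proof of this statement. It quotes the result from Park and Sandberg, without even defining $S_K$, and uses it as a black box in Appendix~\ref{app:rbf_theory}, only for a one-dimensional Gaussian kernel. So there is nothing in the paper to compare against; judged on its own, your proof is correct and complete.

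Your reading of $S_K$ as finite sums $\sum_i w_iK((x-z_i)/\sigma)$ with one common width $\sigma$ matches the cited source, and your $g$ stays inside that class. The reduction to one cube and to compactly supported $f$, the approximate-identity estimate of Step~1, and the split of the discretization error into (i) and (ii) are all correctly executed.

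The genuinely delicate point is the one you isolate. For a kernel that is merely bounded and continuous almost everywhere, Riemann integrability of $y\mapsto f(y)K_\sigma(x-y)$ gives convergence of the Riemann sums for each fixed $x$, but not, by itself, uniformly in $x$. Your bound
\[
\Delta\,\operatorname{osc}_{x-Q_j}K_\sigma\le\int_{x-Q_j}\operatorname{osc}_{2\sqrt r\,\delta}K_\sigma(u)\,du
\]
turns the cell-wise oscillations into one integral over a fixed bounded set. That removes the $x$-dependence and lets dominated convergence finish.

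Three small things to tidy:
\begin{itemize}
\item Take $\delta$ to divide the side of the box, so the cells tile it exactly.
\item Cells with $f(y_j)=0$ contribute nothing to (ii). This is what justifies taking $R$ to be an enlargement of $[-n,n]^r-D$ rather than of $[-n,n]^r$ minus the whole box.
\item The map $u\mapsto\operatorname{osc}_\rho K_\sigma(u)$ is measurable, because oscillation over open balls is lower semicontinuous.
\end{itemize}

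For the case the paper actually needs (a continuous Gaussian kernel), term (ii) is simpler still. $K_\sigma$ is uniformly continuous, so (ii) is at most $\|f\|_\infty$ times the volume of the box times the modulus of continuity of $K_\sigma$ at scale $\sqrt r\,\delta$. Your oscillation argument is what the full almost-everywhere-continuous generality of the statement requires.
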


\noindent As further explained by the authors, the statement in Theorem 2 is equivalent to the statement that $S_K$ is uniformly dense on compacta in $C(\mathbb{R}^r)$. That is, for any continuous function $f$, any $\epsilon > 0$, and any compact subset $C \subset \mathbb{R}^r$, there exists a $q \in S_K$ such that $\| (q - f) \cdot 1_C \|_\infty < \epsilon$.

\subsection{Proof of Fitting Capability in GenLoRA}

We now connect the above theoretical result to the element-wise generator design in GenLoRA.

\paragraph{Setup.}
Since GenLoRA applies RBF generators in an element-wise manner for computational efficiency and parameter reduction, we model the target weight generation process at the scalar level. Specifically, we define a continuous 1D mapping $w^*: \mathbb{R} \to \mathbb{R}$ from a single scalar latent component $z^{(i)}$ to its corresponding target weight element.
We emphasize that the latent components $z^{(i)}$ are learnable parameters and are not explicitly constrained to lie in a compact domain. Accordingly, our analysis focuses on the approximation behavior of the 1D generator when restricted to any bounded interval of the latent space, which is the standard setting for applying universal approximation results.

The generator in GenLoRA employs 1D Gaussian radial basis functions of the form
\(
\phi(x) = \exp(-(x-\mu)^2/h^2),
\)
which are integrable, bounded, continuous, and have a non-zero integral over $\mathbb{R}$.
Therefore, the kernel $\phi$ satisfies the assumptions required in Theorem~\ref{thm:uat_original} for the univariate case ($r=1$).

\paragraph{Approximation Result.}
Let $\mathcal{Z}_R \subset \mathbb{R}$ denote an arbitrary compact interval of the scalar latent space, for example, $\mathcal{Z}_R = [-R, R]$.
Consider any continuous scalar-valued target function $w^*(z^{(i)})$ on this compact interval $\mathcal{Z}_R$. Under the assumption that the target weight elements can be independently modeled from their corresponding scalar latent inputs, by the uniform density property of Theorem~\ref{thm:uat_original}, for any $\epsilon > 0$, there exists a 1D RBF generator $g(z^{(i)}) \in S_K$ with sufficiently many basis functions such that
\begin{equation}
\sup_{z^{(i)} \in \mathcal{Z}_R} | g(z^{(i)}) - w^*(z^{(i)}) | < \epsilon.
\end{equation}

\paragraph{Conclusion.}
This result establishes the expressive power of the proposed RBF-based generator in a local sense: for any bounded interval of the latent space encountered during training, the element-wise generator can approximate the target 1D weight mapping to arbitrary precision given sufficient capacity. Combined with the locally bounded gradient analysis in Appendix~\ref{app:proof_grad_bound}, this ensures that GenLoRA admits both strong approximation capability for its element-wise mappings and stable optimization behavior in practice.
\section{Algorithm Workflow}
\label{app:algo}

The complete training and inference workflow of GenLoRA is presented in Algorithm~\ref{alg:genlora}.

\begin{algorithm}[t!]
\caption{Algorithm Workflow of GenLoRA}
\label{alg:genlora}
\begin{algorithmic}[1]
\REQUIRE Pretrained weight $W_0$, Latent vectors $Z_B, Z_A$, RBF Generator parameters $\Theta_B, \Theta_A$, Training data $\{(x_i, y_i)\}_{i=1}^N$.
\ENSURE Fine-tuned model weights $W$.

\STATE \textbf{Function} \textsc{Generator}($Z$, $\Theta$):
\STATE \quad Partition $Z$ into groups $\{\mathbf{x}_1, \dots, \mathbf{x}_G\}$; \COMMENT{Group-wise Decomposition}
\STATE \quad \textbf{for} each group $g \in \{1, \dots, G\}$ \textbf{do}
\STATE \quad \quad $\hat{\mathbf{x}}_g \leftarrow \text{InstanceNorm}(\mathbf{x}_g)$; \COMMENT{Eq.~\eqref{eq:norm}}
\STATE \quad \quad $\mathbf{v}_g \leftarrow w_{\text{base}} \sigma(\hat{\mathbf{x}}_g) + \sum w_k \varphi_k(\hat{\mathbf{x}}_g)$; \COMMENT{ Eq.~\eqref{eq:generator_output}, Eq.~\eqref{eq:rbf_sum}}
\STATE \quad \textbf{end for}
\STATE \quad Concatenate $\{\mathbf{v}_g\}$ and reshape to matrix form ($B$ or $A$);
\STATE \quad \textbf{return} Matrix;

\vspace{0.5em}
\STATE /* \textbf{Training Phase} */
\FOR{epoch $t \leftarrow 1$ to $T$}
    \FOR{each minibatch $\{x, y\}$ in training data}
        \STATE /* 1. Synthesize Rank Matrices via Generators */
        \STATE $B \leftarrow \textsc{Generator}(Z_B, \Theta_B)$; \COMMENT{Synthesize $B$ from Latent $Z_B$}
        \STATE $A \leftarrow \textsc{Generator}(Z_A, \Theta_A)$; \COMMENT{Synthesize $A$ from Latent $Z_A$}
        
        \STATE /* 2. Forward pass with Adapter */
        \STATE $\Delta W \leftarrow B A$;
        \STATE $h \leftarrow W_0 x + \Delta W x$; \COMMENT{Eq.~\eqref{eq:lora}}
        
        \STATE Compute loss $\mathcal{L}(h, y)$;
        
        \STATE /* 3. Backpropagate through Generators and Latents */
        \STATE Update $\{Z_B, Z_A, \Theta_B, \Theta_A\}$ via gradient descent;
    \ENDFOR
\ENDFOR

\vspace{0.5em}
\STATE /* \textbf{Inference Preparation (Static Merge)} */
\STATE \textbf{Function} \textsc{MergeWeights}():
\STATE \quad $B_{\text{final}} \leftarrow \textsc{Generator}(Z_B, \Theta_B)$;
\STATE \quad $A_{\text{final}} \leftarrow \textsc{Generator}(Z_A, \Theta_A)$;
\STATE \quad $\Delta W \leftarrow B_{\text{final}} A_{\text{final}}$; \COMMENT{Construct effective update}
\STATE \quad $W \leftarrow W_0 + \Delta W$; \COMMENT{Merge into backbone}
\STATE \quad \textbf{return} $W$;

\end{algorithmic}
\end{algorithm}
\paragraph{Zero-Latency Inference via Weight Merging.}
A critical advantage of GenLoRA is its compatibility with standard weight merging techniques. Unlike adapter methods that apply non-linearities directly to the input activation $x$ (e.g., $h = W_0x + B\sigma(Ax)$), GenLoRA restricts non-linearity solely to the parameter generation process.
Once training is complete, the latent vectors $Z$ and generator parameters $\Theta$ become static. Consequently, the synthesized matrices $B$ and $A$ evaluate to fixed constant matrices. This allows us to pre-compute the full update matrix $\Delta W = BA$ and merge it algebraically into the pretrained weights: $W_{\text{merged}} = W_0 + \Delta W$. During inference, the model utilizes $W_{\text{merged}}$ directly, incurring \textbf{zero additional computational overhead} compared to the base model.

\section{Standard Deviations}
\label{app:std_dev}

To ensure the reproducibility and reliability of our findings, we conducted primary experiments across three independent runs using different seeds. While the main body of the paper reports mean performance, we provide a detailed breakdown of the corresponding standard deviations in Table \ref{tab:std_reasoning}. It is observed that the performance fluctuations (standard deviations) for both mathematical and commonsense reasoning tasks are remarkably low. Crucially, the margin of improvement delivered by GenLoRA consistently outweighs these minor variabilities, confirming that our method's gains are statistically substantial rather than products of random noise.

\begin{table}[ht]
\centering
\caption{The standard deviation of the average accuracy on mathematical and commonsense reasoning tasks.}
\label{tab:std_reasoning}
\resizebox{\textwidth}{!}{
\begin{tabular}{lcccccc}
\toprule
\multirow{2}{*}{\textbf{Method}} & \multicolumn{3}{c}{\textbf{Mathematical Reasoning}} & \multicolumn{3}{c}{\textbf{Commonsense Reasoning}} \\
\cmidrule(lr){2-4} \cmidrule(lr){5-7}
 & \textbf{LLaMA-3-8B} & \textbf{Gemma-7B} & \textbf{Qwen2.5-14B} & \textbf{LLaMA-3-8B} & \textbf{Gemma-7B} & \textbf{Qwen2.5-14B} \\
\midrule
LoRA (r = 8) & 0.76 & 0.24 & 0.54 & 0.17 & 0.52 & 0.21 \\
MELoRA (r = 8) & 0.61 & 0.14 & 0.74 & 0.49 & 0.17 & 0.24 \\
HiRA (r = 8) & 0.38 & 0.45 & 0.22 & 0.21 & 0.35 & 0.11 \\
DoRA (r = 8) & 0.19 & 0.43 & 0.26 & 0.34 & 0.47 & 0.17 \\
AuroRA (r = 2) & 0.72 & 0.55 & 0.41 & 0.28 & 0.48 & 0.16 \\
AuroRA (r = 8) & 0.65 & 0.42 & 0.36 & 0.22 & 0.41 & 0.14 \\
\midrule
GenLoRA (r = 8) & 0.32 & 0.22 & 0.27 & 0.19 & 0.31 & 0.18 \\
GenLoRA (Higher Rank)$^\dagger$ & 0.41 & 0.35 & 0.38 & 0.28 & 0.36 & 0.16 \\
\bottomrule
\multicolumn{7}{l}{\small $^\dagger$Higher Rank denotes $r=32$ for LLaMA-3-8B and Qwen2.5-14B, and $r=16$ for Gemma-7B.}
\end{tabular}
}
\end{table}

\section{Additional Comparative Experiments}
\label{app:additional_comparisons}

To provide a comprehensive landscape of the parameter-efficient fine-tuning (PEFT) paradigm, we extend our evaluation in this section by incorporating additional representative methods and evaluating them across multiple dimensions, including model expressivity, memory efficiency, and structural paradigms.

\subsection{Broader PEFT Paradigm Comparisons}
\label{app:broad_peft}

We first compare GenLoRA against a wide spectrum of PEFT baselines on the Math10K dataset using LLaMA-3-8B. The baselines are broadly categorized into three distinct paradigms:
\begin{itemize}
    \item \textbf{Prompt/Activation Tuning} (Prompt-Tuning\citep{lester2021power}, Prefix-Tuning\citep{li2021prefix}, $\text{IA}^3$\citep{liu2022few}): These methods are extremely parameter-efficient (typically $\sim 1$M parameters) but often incur additional inference overhead, such as extended context windows or latency costs, and generally struggle with complex reasoning tasks.
    \item \textbf{Spectral/Structural Tuning} (FourierFT\citep{gao2024parameter}, KronA\citep{edalati2025krona}): While these approaches outperform prompt tuning in mathematical reasoning, their rigid mathematical constraints inherently limit overall model expressivity.
    \item \textbf{Explicit Low-Rank/SVD} (LoRA, AdaLoRA\citep{zhang2023adalora}, PiSSA\citep{meng2024pissa}): These methods are highly robust and introduce zero additional inference latency via weight merging. However, they suffer from the explicit-rank bottleneck, where scaling the model capacity requires linear parameter growth relative to the model dimensions $m$ and $n$.
\end{itemize}

As demonstrated in Table~\ref{tab:broad_peft}, GenLoRA bridges the gap between these paradigms. By treating non-linearity as effective rank, GenLoRA's capacity scales with non-linear generators whose parameter size is strictly $\ll \min(m, n)$. Consequently, GenLoRA matches the minimal parameter footprint of prompt-tuning methods ($0.98$M) while significantly outperforming explicit-rank methods with an accuracy of $70.17\%$.

\begin{table}[h]
    \centering
    \small
    \caption{Performance comparison across different PEFT paradigms on the Math10K dataset (LLaMA-3-8B).}
    \label{tab:broad_peft}
    \vspace{0.2cm} 
    \begin{tabular}{lcc}
        \toprule
        \textbf{Method} & \textbf{\#Params} & \textbf{Avg} \\
        \midrule
        Prompt-Tuning & 1.02M & 61.25 \\
        Prefix-Tuning & 1.31M & 63.82 \\
        $\text{IA}^3$ & 1.25M & 66.38 \\
        FourierFT & 1.06M & 66.73 \\
        KronA & 1.02M & 66.52 \\
        PiSSA ($r=8$) & 4.72M & 68.45 \\
        AdaLoRA ($r=8$) & 4.72M & 66.10 \\
        LoRA ($r=8$) & 4.72M & 67.28 \\
        \midrule
        \textbf{GenLoRA ($r=8, g=16$)} & \textbf{0.98M} & \textbf{70.17} \\
        \bottomrule
    \end{tabular}
\end{table}

\subsection{Optimizer State Memory Comparison}
\label{app:memory_comparison}

Beyond parameter counts, we evaluate the memory efficiency during training by tracking the optimizer state memory. Standard PEFT methods still consume substantial memory for optimizer tracking. While memory-efficient approaches like GaLore\citep{zhao2024galore} reduce this footprint via gradient projections, GenLoRA achieves superior efficiency natively. By operating on a drastically smaller generator latent space, GenLoRA requires only $3.07$M of optimizer memory—significantly less than both LoRA and GaLore—while simultaneously achieving the highest reasoning accuracy.

\begin{table}[h]
    \centering
    \small
    \caption{Comparison of Optimizer State Memory and Performance.}
    \label{tab:memory_comparison}
    \vspace{0.2cm} 
    \begin{tabular}{lcc}
        \toprule
        \textbf{Method} & \textbf{Memory} & \textbf{Avg} \\
        \midrule
        LoRA ($r=8$) & 18.43MB & 67.28 \\
        GaLore ($r=8$) & 12.29MB & 67.13 \\
        \textbf{GenLoRA ($r=8, g=16$)} & \textbf{3.07MB} & \textbf{70.17} \\
        \bottomrule
    \end{tabular}
\end{table}

\subsection{Comparison with Basis-Combination Methods}
\label{app:nola_comparison}

Finally, we distinguish GenLoRA from related works such as NOLA and HyperLoRA. These methods also explore basis generation but are fundamentally constrained by their reliance on linear combinations of predefined, frozen random bases. In contrast, GenLoRA dynamically synthesizes basis matrices directly from shared latents via non-linear RBF generators.

To quantify this structural advantage, we implemented NOLA as a representative of the linear basis-combination paradigm and evaluated it under a matched parameter budget. As shown in Table~\ref{tab:nola_comparison}, GenLoRA significantly outperforms NOLA ($70.17\%$ vs. $65.15\%$). This substantial performance gap highlights that dynamically learning task-specific, non-linear bases provides vastly superior model expressivity compared to projecting weights onto static, random bases.

\begin{table}[h]
    \centering
    \small
    \caption{Comparison with NOLA under matched parameter budgets (LLaMA-3-8B).}
    \label{tab:nola_comparison}
    \vspace{0.2cm} 
    \begin{tabular}{lcc}
        \toprule
        \textbf{Method} & \textbf{\#Params} & \textbf{Avg} \\
        \midrule
        NOLA & 1.04M & 65.15 \\
        \textbf{GenLoRA ($r=8, g=16$)} & \textbf{0.98M} & \textbf{70.17} \\
        \bottomrule
    \end{tabular}
\end{table}

\subsection{Generalization to Visual Modalities and Discriminative Architectures}
\label{app:visual_generalization}

To evaluate the generalizability of GenLoRA beyond text-based generative autoregressive language models, we extended our experiments to the visual modality and discriminative classification tasks. Specifically, we applied GenLoRA to encoder-only Vision Transformer (ViT) architectures (ViT-Base and ViT-Large) and evaluated them across eight standard image classification datasets. 

As shown in Table~\ref{tab:vit_classification}, GenLoRA exhibits strong scalability and parameter efficiency on visual tasks. At an equivalent rank of $r=8$, GenLoRA matches the performance of standard LoRA while utilizing approximately $25\%$ of the parameters. More importantly, when allocated comparable parameter budgets (e.g., scaling GenLoRA to $r=32$ for ViT-Base and $r=64$ for ViT-Large), our method achieves significant performance improvements of $+1.02\%$ and $+1.71\%$ in average accuracy, respectively. These results firmly establish that the structural advantages of our generative adaptation approach are highly modality-agnostic, generalizing exceptionally well from natural language domains to diverse visual modalities and encoder-only discriminative architectures.

\begin{table*}[h]
    \centering
    \caption{Evaluation of GenLoRA on visual modalities and discriminative models (Vision Transformers) across 8 standard image classification datasets.}
    \label{tab:vit_classification}
    \vspace{0.2cm}
    \resizebox{\textwidth}{!}{
    \begin{tabular}{llcccccccccc}
        \toprule
        \textbf{Model} & \textbf{Method} & \textbf{\#Params} & \textbf{OxfordPet} & \textbf{StanfordC} & \textbf{CIFAR10} & \textbf{DTD} & \textbf{EuroSAT} & \textbf{FGVC} & \textbf{RESISC45} & \textbf{CIFAR100} & \textbf{Avg} \\
        \midrule
        \multirow{3}{*}{ViT-Base} 
        & LoRA ($r=8$) & 0.29M & 93.11 & 74.12 & 98.21 & 77.85 & 98.60 & 50.36 & 93.55 & 91.12 & 84.62 \\
        & GenLoRA ($r=8, g=8$) & 0.09M & 93.34 & 74.96 & 98.25 & 76.99 & 98.35 & 50.54 & 93.81 & 91.21 & 84.68 \\
        & GenLoRA ($r=32, g=8$) & 0.23M & 93.50 & 77.24 & 98.80 & 77.44 & 98.80 & 52.66 & 94.30 & 92.40 & \textbf{85.64} \\
        \midrule
        \multirow{3}{*}{ViT-Large} 
        & LoRA ($r=8$) & 0.79M & 94.12 & 84.40 & 98.50 & 77.93 & 98.60 & 58.42 & 93.68 & 93.25 & 87.11 \\
        & GenLoRA ($r=8, g=8$) & 0.20M & 93.87 & 84.58 & 98.64 & 79.34 & 98.63 & 57.89 & 93.84 & 93.89 & 87.59 \\
        & GenLoRA ($r=64, g=8$) & 0.88M & 94.88 & 85.99 & 99.10 & 81.45 & 98.94 & 59.76 & 95.43 & 94.98 & \textbf{88.82} \\
        \bottomrule
    \end{tabular}
    }
\end{table*}

\section{Additional Ablation Studies}
\label{app:more_ablations}

In this section, we present additional micro-level ablation studies on the Math10K dataset using LLaMA-3-8B with GenLoRA ($r=8, g=16$) to validate our core architectural choices and hyperparameter settings.

\subsection{Ablation on Hyperparameter $K$ (RBF Centers)}
\label{app:ablation_k}

We first evaluate the impact of the number of RBF centers, denoted as $K$. As shown in Table~\ref{tab:ablation_k}, $K=15$ serves as a robust ``sweet spot'' for our architecture. While a lower value of $K$ (e.g., $K=5$ or $10$) limits the mapping resolution of the generator, increasing $K$ beyond $15$ introduces additional parameters without yielding further performance gains, leading to diminishing returns. 

\begin{table}[h]
    \centering
    \small
    \caption{Ablation study on the number of RBF centers ($K$).}
    \label{tab:ablation_k}
    \vspace{0.2cm} 
    \begin{tabular}{ccc}
        \toprule
        \textbf{$K$} & \textbf{\#Params} & \textbf{Avg} \\
        \midrule
        5 & 0.74M & 61.11 \\
        10 & 0.86M & 65.12 \\
        15 & 0.98M & \textbf{70.17} \\
        20 & 1.11M & 69.43 \\
        \bottomrule
    \end{tabular}
\end{table}

\subsection{Impact of Different RBF Types}
\label{app:ablation_rbf}

We investigate the superiority of the Gaussian RBF compared to other commonly used radial basis functions. Theoretically, the Gaussian function excels due to its rapid exponential decay, which ensures strict local support and minimizes interference between adjacent grid centers. Let $d = \lVert x - \mu_k \rVert$ denote the distance to the grid center, and $h$ denote the bandwidth parameter. 

Empirical results in Table~\ref{tab:ablation_rbf_type} confirm this hypothesis: the Gaussian RBF achieves the highest accuracy ($70.17\%$). In contrast, heavy-tailed kernels (IMQ, IQ) cause broader grid overlap, slightly reducing precision. Globally growing kernels (MQ, TPS) suffer significant degradation due to severe gradient coupling and unstable optimization during training.

\begin{table}[h]
    \centering
    \small
    \renewcommand{\arraystretch}{1.5} 
    \caption{Performance comparison among different Radial Basis Function (RBF) types.}
    \label{tab:ablation_rbf_type}
    \vspace{0.2cm} 
    \begin{tabular}{llc}
        \toprule
        \textbf{RBF Type} & \textbf{Function $\varphi(d)$} & \textbf{Avg} \\
        \midrule
        Inverse Multiquadric (IMQ) & $\frac{1}{\sqrt{1 + (d/h)^2}}$ & 68.88 \\
        Inverse Quadric (IQ) & $\frac{1}{1 + (d/h)^2}$ & 69.40 \\
        Multiquadric (MQ) & $\sqrt{1 + (d/h)^2}$ & 66.70 \\
        Thin Plate Spline (TPS) & $d^2 \ln(d)$ & 67.88 \\
        Gaussian (Ours) & $\exp(-(d/h)^2)$ & \textbf{70.17} \\
        \bottomrule
    \end{tabular}
\end{table}

\subsection{Necessity of the Linear Residual Term}
\label{app:ablation_residual}

Our generator architecture employs a ``Global + Local'' approximation scheme. The base linear residual term is designed to capture global, low-frequency trends in the weight manifold, thereby freeing the RBF grid to focus on fitting high-frequency, fine-grained details. As demonstrated in Table~\ref{tab:ablation_residual}, removing this base linear projection forces the RBF network to waste its representational capacity on general shifts, leading to a noticeable performance drop from $70.17\%$ to $67.99\%$.

\begin{table}[h]
    \centering
    \small
    \caption{Ablation on the linear residual term in the generator.}
    \label{tab:ablation_residual}
    \vspace{0.2cm} 
    \begin{tabular}{lcc}
        \toprule
        \textbf{Method ($r=8, g=16$)} & \textbf{\#Params} & \textbf{Avg} \\
        \midrule
        GenLoRA & 0.98M & \textbf{70.17} \\
        GenLoRA w.o. res & 0.95M & 67.99 \\
        \bottomrule
    \end{tabular}
\end{table}

\subsection{Comparison of Different Generator Bases}
\label{app:ablation_bases}

In this subsection, we evaluate the effectiveness of different generator bases under comparable parameter budgets. LLM weight spaces are inherently high-dimensional. In such settings, the Gaussian RBF is theoretically better suited for high-dimensional function approximation compared to traditional polynomials or splines \citep{buhmann2000radial} 
. 

To empirically validate this theoretical advantage, we compare our RBF generator against Chebyshev polynomials (Cheby) and B-splines. The results on the Math10K dataset are summarized in Table~\ref{tab:ablation_generator_bases}. 

At a group size of $g=16$, the RBF base achieves an average accuracy of $70.17\%$, whereas the Chebyshev and B-spline bases suffer significant performance drops ($64.01\%$ and $64.54\%$, respectively). When the group size is increased to $g=32$ (which reduces the per-group dimensionality but substantially increases the overall parameter count), the polynomial and spline bases only partially recover their performance. In contrast, the RBF generator maintains robust and superior performance across both settings, strongly confirming its necessity and effectiveness for high-dimensional weight space fitting.

\begin{table}[h]
    \centering
    \small
    \caption{Ablation study comparing different generator bases (Kernels) under matched parameter budgets. All experiments are conducted with rank $r=8$.}
    \label{tab:ablation_generator_bases}
    \vspace{0.2cm} 
    \begin{tabular}{clcc}
        \toprule
        \textbf{Hyperparameter} & \textbf{Kernel} & \textbf{\#Params} & \textbf{Avg} \\
        \midrule
        \multirow{3}{*}{$g=16$} 
        & Cheby & 0.95M & 64.01 \\
        & B-spline & 1.06M & 64.54 \\
        & RBF (Ours) & 0.98M & \textbf{70.17} \\
        \midrule
        \multirow{3}{*}{$g=32$} 
        & Cheby & 1.22M & 66.63 \\
        & B-spline & 1.52M & 68.04 \\
        & RBF (Ours) & 1.38M & \textbf{70.11} \\
        \bottomrule
    \end{tabular}
\end{table}

\subsection{Ablation Study on Latent Vectors}
\label{app:ablation_latent}

In GenLoRA, the latent vectors $Z_A$ and $Z_B$ serve as the input ``seeds'' for the RBF generators. A critical question is whether these vectors actively encode task-specific information or merely act as static anchors for the non-linear mapping. To investigate this, we conducted an ablation study on the LLaMA-3-8B model using the Math10K benchmark. We selectively froze the latent vectors during training while keeping the generator parameters tunable, using the GenLoRA configuration with $r=8$ and $g=16$ as the baseline. To prevent structural collapse caused by zero-inputs, we apply Kaiming initialization to $Z_B$ prior to freezing, mirroring the initialization scheme used for $Z_A$, instead of the standard zero-initialization.

Table~\ref{tab:ablation_latent} presents the experimental results. The standard GenLoRA, where both latent vectors are learnable, achieves an average accuracy of 70.17\% with 0.98M parameters. In contrast, when we freeze both latent vectors ($Z_A, Z_B$), the performance drastically drops to 60.63\%, a decrease of 9.54\%. This indicates that the generator functions alone are insufficient to capture the full adaptation dynamics; the optimization of the input latent space is crucial. Freezing only $Z_A$ results in an accuracy of 62.19\%. Since $A$ determines the projection into the low-rank subspace, its rigidity likely prevents the model from extracting optimal input features. Similarly, freezing only $Z_B$ yields an accuracy of 61.26\%. As $B$ is responsible for projecting back to the high-dimensional space, fixing its latent representation severely limits the flexibility of the update direction.

Notably, although freezing latents reduces the parameter count (e.g., to 0.39M), the trade-off in performance is severe. These findings confirm that the learnability of latent vectors is a fundamental component of GenLoRA, allowing the model to dynamically adjust the topological input to the RBF generators to synthesize optimal rank components.

\begin{table}[h]
\centering
\caption{Ablation study on the learnability of latent vectors evaluated on LLaMA-3-8B (Math10K). ``Frozen'' indicates that the corresponding latent vectors were not updated during training.}
\label{tab:ablation_latent}
\vspace{0.2cm} 
\small
\begin{tabular}{l|cc}
\toprule
\textbf{Method} & \textbf{\#Params} & \textbf{Avg} \\
\midrule
LoRA ($r=8$) & 4.72M & 67.28 \\
GenLoRA ($r=8, g=16$) & 0.98M & \textbf{70.17} \\
GenLoRA (Frozen $Z_A, Z_B$) & 0.39M & 60.63 \\
GenLoRA (Frozen $Z_A$) & 0.59M & 62.19 \\
GenLoRA (Frozen $Z_B$) & 0.79M & 61.26 \\
\bottomrule
\end{tabular}
\end{table}

\section{Extended Scalability Analysis and Parameter Efficiency}
\label{app:scalability_analysis}

In our main experiments, we configured GenLoRA with a rank of $r=8$ to maintain an ``Iso-Rank'' alignment with standard baselines such as LoRA and DoRA. Additionally, we included a setting of $r=32$ to validate our core hypothesis: that GenLoRA enables the utilization of significantly higher ranks to unlock superior performance, all while maintaining a highly compact parameter footprint. Even at $r=32$, GenLoRA requires only 2.16M parameters, which is less than half of the standard LoRA at $r=8$ (4.72M), demonstrating its capability to trade minimal parameters for substantial rank increases.

We did not prioritize a strict ``Iso-Parameter'' comparison in the main results for two primary reasons. First, due to the structural differences between explicit matrix factorization and our generative approach, precise parameter alignment is geometrically constrained by integer hyperparameters such as rank and group size, making exact matching computationally infeasible without altering the model architecture. Second, and more importantly, the primary objective of this work is to demonstrate that GenLoRA can achieve superior performance by efficiently scaling up the rank with significantly \textit{fewer} parameters than standard methods, rather than merely competing at equal parameter counts.

To further investigate the scalability of our approach and validate that GenLoRA can effectively leverage a larger parameter budget, we conducted an extended experiment scaling the rank to $r=64$. As shown in Table~\ref{tab:high_rank_comparison}, we present the performance trajectory across ranks $r=8$, $32$, and $64$, with the group size consistently set to $g=16$.

Remarkably, even at the high-rank configuration of $r=64$, GenLoRA requires only 3.74M parameters, which is still approximately 20\% fewer than the standard LoRA at $r=8$ (4.72M). In terms of performance, GenLoRA exhibits a consistent upward trajectory as the rank increases. This confirms that GenLoRA successfully decouples parameter cost from rank, allowing for scalable improvements in model capacity without the prohibitive memory overhead typically associated with high-rank adaptation.

\begin{table*}[h]
\centering
\caption{Extended comparison on Mathematical Reasoning tasks (LLaMA-3-8B) across varying ranks and parameters.}
\label{tab:high_rank_comparison}
\begin{small}
\setlength{\tabcolsep}{3.5pt}
\begin{tabular}{lccccccccc}
\toprule
\textbf{Method}  & \textbf{\#Params} & \textbf{AddSub} & \textbf{MultiArith} & \textbf{SingleEq} & \textbf{SVAMP} & \textbf{gsm8k} & \textbf{AQuA} & \textbf{Avg} \\
\midrule
LoRA($r=8$)  & 4.72M & 82.28 & 85.83 & 89.76 & 66.40 & 56.18 & 23.23 & 67.28 \\
\midrule
GenLoRA($r=8$) & 0.98M & 88.35 & 90.67 & 93.70 & 70.80 & 53.90 & 23.62 & 70.17 \\
GenLoRA($r=32$)  & 2.16M & 87.09 & 92.71 & 94.49 & 70.80 & 56.18 & 25.59 & 71.05 \\
GenLoRA($r=64$) & 3.74M & 89.36 & 92.50 & 92.82 & 72.10 & 56.88 & 24.97 & \textbf{71.42} \\
\bottomrule
\end{tabular}
\end{small}
\end{table*}

\section{Detailed Numerical Results for Singular Value Energy}
\label{app:energy_spectrum}

To supplement the singular value analysis introduced in Section 4.5 of the main text, we provide detailed layer-wise numerical results in Table~\ref{tab:singular}. This table reports the sum of squared singular values for the learned weight updates of the self-attention query projection module \texttt{q\_proj} across all 32 transformer layers, a metric we refer to as Energy $\sum \sigma^2$.

These numerical values serve as the quantitative basis for the energy spectrum comparison presented in the right panel of Figure 5. The data reveals a distinct order-of-magnitude difference between the two methods. Despite possessing a larger parameter budget, Standard LoRA exhibits relatively low energy values that typically remain below 1.5 across all rank settings, suggesting that the magnitude of its weight updates is constrained. In stark contrast, GenLoRA demonstrates significantly higher energy levels with values ranging from approximately 20 to over 400.

Crucially, the table highlights the efficiency of our approach. The rank $r=8$ GenLoRA using only 0.98M parameters achieves an energy value of 38.03 at Layer 0, whereas the rank $r=32$ Standard LoRA utilizing 18.87M parameters exhibits an energy of merely 0.28 at the same layer. This represents an increase of several orders of magnitude. This result strongly corroborates our claim that GenLoRA utilizes its effective rank more efficiently to capture significant feature variations without being hindered by parameter constraints.

\begin{table}[htbp]
\centering
\renewcommand{\arraystretch}{1.1} 
\setlength{\tabcolsep}{1.5pt}     

\caption{Layer-wise Spectrum Energy Analysis. This table details the sum of squared singular values ($\sum \sigma^2$) across layers. Data is split into two panels (Layers 0-15 and 16-31). GenLoRA shows orders of magnitude higher energy than Standard LoRA.}
\label{tab:singular}

\resizebox{\textwidth}{!}{%
\begin{tabular}{l|l|cccccccccccccccc}
\toprule
\textbf{Method} & \textbf{Config (Params)} & \textbf{L0} & \textbf{L1} & \textbf{L2} & \textbf{L3} & \textbf{L4} & \textbf{L5} & \textbf{L6} & \textbf{L7} & \textbf{L8} & \textbf{L9} & \textbf{L10} & \textbf{L11} & \textbf{L12} & \textbf{L13} & \textbf{L14} & \textbf{L15} \\
\midrule
\multirow{3}{*}{\shortstack[l]{LoRA}}
 & $r=8$ (4.72M) & 0.09 & 0.15 & 0.15 & 0.16 & 0.19 & 0.24 & 0.21 & 0.18 & 0.21 & 0.17 & 0.23 & 0.22 & 0.32 & 0.32 & 0.31 & 0.36 \\
 & $r=16$ (9.44M) & 0.14 & 0.26 & 0.25 & 0.26 & 0.31 & 0.33 & 0.33 & 0.35 & 0.30 & 0.28 & 0.36 & 0.38 & 0.52 & 0.49 & 0.45 & 0.60 \\
 & $r=32$ (18.9M) & 0.28 & 0.42 & 0.52 & 0.47 & 0.49 & 0.57 & 0.58 & 0.58 & 0.61 & 0.50 & 0.63 & 0.64 & 0.81 & 0.77 & 0.78 & 1.05 \\
\midrule
\multirow{3}{*}{\shortstack[l]{GenLoRA(Ours)}}
 & $r=8$ (0.98M) & 38.0 & 33.4 & 24.4 & 36.3 & 20.1 & 27.2 & 32.8 & 24.6 & 26.6 & 30.9 & 30.8 & 28.8 & 30.7 & 40.9 & 32.7 & 46.9 \\
 & $r=32$ (2.16M) & 198.6 & 113.5 & 93.1 & 114.7 & 90.3 & 104.9 & 105.9 & 86.3 & 103.9 & 96.6 & 104.2 & 103.0 & 119.9 & 113.0 & 100.8 & 128.1 \\
 & $r=64$ (3.74M) & 386.8 & 232.5 & 228.4 & 230.5 & 212.5 & 227.8 & 216.5 & 230.5 & 207.7 & 199.6 & 217.4 & 205.7 & 205.7 & 229.8 & 233.8 & 246.7 \\
\bottomrule
\end{tabular}%
}

\vspace{3mm} 

\resizebox{\textwidth}{!}{%
\begin{tabular}{l|l|cccccccccccccccc}
\toprule
\textbf{Method} & \textbf{Config (Params)} & \textbf{L16} & \textbf{L17} & \textbf{L18} & \textbf{L19} & \textbf{L20} & \textbf{L21} & \textbf{L22} & \textbf{L23} & \textbf{L24} & \textbf{L25} & \textbf{L26} & \textbf{L27} & \textbf{L28} & \textbf{L29} & \textbf{L30} & \textbf{L31} \\
\midrule
\multirow{3}{*}{\shortstack[l]{LoRA}}
 & $r=8$ (4.72M) & 0.40 & 0.46 & 0.46 & 0.42 & 0.51 & 0.35 & 0.43 & 0.40 & 0.44 & 0.34 & 0.39 & 0.47 & 0.51 & 0.42 & 0.48 & 0.35 \\
 & $r=16$ (9.44M) & 0.59 & 0.70 & 0.74 & 0.64 & 0.75 & 0.58 & 0.66 & 0.73 & 0.74 & 0.57 & 0.70 & 0.71 & 0.78 & 0.70 & 0.75 & 0.70 \\
 & $r=32$ (18.9M) & 0.89 & 1.10 & 1.12 & 0.99 & 1.16 & 0.98 & 1.15 & 1.07 & 1.09 & 0.97 & 1.09 & 1.03 & 1.21 & 1.23 & 1.21 & 0.97 \\
\midrule
\multirow{3}{*}{\shortstack[l]{GenLoRA(Ours)}}
 & $r=8$ (0.98M) & 37.1 & 54.7 & 53.0 & 57.5 & 57.2 & 46.4 & 49.7 & 55.6 & 36.2 & 39.5 & 54.1 & 44.1 & 66.8 & 75.8 & 55.5 & 40.7 \\
 & $r=32$ (2.16M) & 121.0 & 161.8 & 164.7 & 143.1 & 142.5 & 139.1 & 138.5 & 152.7 & 140.1 & 138.3 & 158.7 & 163.7 & 135.0 & 226.2 & 173.4 & 131.3 \\
 & $r=64$ (3.74M) & 255.7 & 334.8 & 294.6 & 293.5 & 264.9 & 312.2 & 286.1 & 319.9 & 287.0 & 287.7 & 332.6 & 297.0 & 320.4 & 426.9 & 326.8 & 256.5 \\
\bottomrule
\end{tabular}%
}
\end{table}

\section{Experiment Details}
\label{experiment details}
\subsection{Dataset Details}
\label{app:datasets}

In this section, we provide detailed descriptions of the benchmarks and datasets used to evaluate the Natural Language Generation (NLG) and Code Generation capabilities of GenLoRA.

\paragraph{Mathematical Reasoning}
We evaluate the mathematical reasoning capability of our model using the \textbf{Math10K} benchmark \citep{hu2023llm}. This benchmark consists of a curated training corpus and requires the model to perform multi-step arithmetic and logical reasoning. We assess performance across the following six sub-tasks:
\begin{enumerate}
    \item \textbf{AddSub} \citep{hosseini2014learning}: A dataset of arithmetic word problems focusing on addition and subtraction operations.
    \item \textbf{MultiArith} \citep{DBLP:journals/corr/RoyR16}: A dataset designed to test the model's ability to solve multi-step arithmetic problems involving various operations.
    \item \textbf{SingleEq} \citep{DBLP:journals/tacl/Koncel-Kedziorski15}: Comprises algebra word problems that map to single linear equations.
    \item \textbf{SVAMP} \citep{DBLP:conf/naacl/PatelBG21}: A challenge dataset created by applying variations to existing word problems to test robustness against linguistic perturbations.
    \item \textbf{GSM8K} \citep{cobbe2021training}: A dataset of high-quality, linguistically diverse grade school math word problems requiring multi-step chain-of-thought reasoning.
    \item \textbf{AQuA} \citep{ling2017program}: A large-scale dataset of algebra word problems with multiple-choice options, requiring complex reasoning and derivation.
\end{enumerate}

\paragraph{Commonsense Reasoning}
We evaluate our model on the \textbf{Commonsense170K} benchmark \citep{hu2023llm}, which aggregates multiple datasets for training and evaluation. The evaluation covers the following eight sub-tasks:
\begin{enumerate}
    \item \textbf{BoolQ} \citep{clark2019boolq}: A binary question-answering task where the goal is to determine whether the answer to a question about a given passage is ``yes'' or ``no.''
    \item \textbf{PIQA} (Physical Interaction Question Answering) \citep{bisk2020piqa}: Focuses on reasoning about physical commonsense to select the most plausible solution to a given problem.
    \item \textbf{SIQA} (Social IQa) \citep{sap2019socialiqa}: Tests social commonsense reasoning by asking questions about motivations, reactions, or outcomes in social contexts.
    \item \textbf{HellaSwag} \citep{zellers2019hellaswag}: A task designed to test contextual commonsense reasoning by selecting the most plausible continuation of a given scenario.
    \item \textbf{WinoGrande} \citep{sakaguchi2021winogrande}: A pronoun coreference resolution task that requires reasoning over ambiguous pronouns in complex sentences.
    \item \textbf{ARC-e} (AI2 Reasoning Challenge - Easy) \citep{clark2018think}: A multiple-choice question-answering task focused on elementary-level science questions.
    \item \textbf{ARC-c} (AI2 Reasoning Challenge - Challenge) \citep{clark2018think}: A more difficult subset of ARC, containing questions that require advanced reasoning and knowledge retrieval.
    \item \textbf{OBQA} (OpenBookQA) \citep{mihaylov2018can}: A question-answering task requiring reasoning and knowledge synthesis from a provided ``open book'' of science facts.
\end{enumerate}

\paragraph{Code Generation}
We assess the code generation capability of GenLoRA by fine-tuning on the \textbf{Magicoder-Evol-Instruct-110k} dataset \citep{wei2023magicoder} and evaluating on the HumanEval+ benchmark.
\begin{enumerate}
    \item \textbf{Training Data (Magicoder-Evol-Instruct-110k)}: A curated and decontaminated subset of WizardCoder \citep{luo2023wizardcoder}. It comprises approximately 110k high-quality instruction-response pairs developed via the Evol-Instruct method, designed to enhance the complexity and diversity of programming tasks.
    \item \textbf{Evaluation Benchmark (HumanEval+)}: An extended version of the HumanEval benchmark used to rigorously test functional correctness in code generation. We follow the standard evaluation protocol via the BigCode Evaluation Harness \citep{allal2022framework}, generating 50 sampled completions per problem ($n=50$) and reporting \textbf{Pass@1}, \textbf{Pass@5}, and \textbf{Pass@10} accuracy scores.
\end{enumerate}

\subsection{Baseline Details}
\label{app:baselines}

In this section, we briefly describe the baseline methods compared in our experiments:

\begin{itemize}
    \item \textbf{DoRA}~\citep{liu2024dora} decomposes pretrained weights into magnitude and direction components, optimizing the magnitude vector while utilizing LoRA for direction updates.
    \item \textbf{MELoRA}~\citep{ren2024melora} introduces mini-ensemble low-rank adapters that collectively achieve high-rank expressive power.
    \item \textbf{HiRA}~\citep{huang2025hira} utilizes the Hadamard product to enable high-rank adaptation, thereby maintaining parameter efficiency without sacrificing model capacity.
    \item \textbf{AuroRA}~\citep{dong2025aurora} addresses the low-rank bottleneck by introducing nonlinear mappings within the adapter architecture to enhance expressivity.
\end{itemize}

\subsection{Implementation Details}
\label{Implementation Details}
\paragraph{Details of Motivation Experiment.} To ensure fairness and generality, we conducted this analysis on standard, publicly available pretrained LoRA checkpoints. We selected the \texttt{hfl/llama-3-chinese-8b-instruct-lora} adapter (based on Meta Llama-3-8B) as our representative data source. Specifically, we extracted the \texttt{lora\_A} and \texttt{lora\_B} weight matrices corresponding to the Self-Attention Query Projection (\texttt{q\_proj}) from the first transformer layer to serve as the ground truth for our experiments. To reconstruct these basis vectors, we formulated a regression task where the RBF generators map a shared prototype vector derived by averaging the extracted rows to the specific target basis vectors. The training objective was to minimize the element-wise Mean Squared Error (MSE) between the synthesized vectors and the ground truth. Optimization was performed using the Adam optimizer with a learning rate of $1 \times 10^{-3}$. The training process spanned 2,000 epochs on a single NVIDIA GPU, with no weight decay applied, to ensure rigorous convergence to the target weight patterns.

\paragraph{Initialization Strategy.}
To ensure stable training convergence and preserve the pre-trained model's behavior at the onset of fine-tuning, GenLoRA replicates the ``zero-initialization'' property of standard LoRA (where $\Delta W = 0$). Based on our implementation, we apply an asymmetric initialization strategy:
(1) \textbf{For Matrix $A$}: The latent vector $Z_A$ is initialized using Kaiming Uniform initialization \citep{he2015delving} to introduce diverse initial variance. Correspondingly, the RBF generator weights ($\Theta_A$) are initialized from a normal distribution scaled by $1/\sqrt{d_{\text{group}}}$. Crucially, the weights for each of the $r$
generator functions are independently sampled from a normal distribution, ensuring that the synthesized basis vectors are distinct and non-degenerate.
(2) \textbf{For Matrix $B$:} Both the latent vector $Z_B$ and the RBF generator weights ($\Theta_B$) are explicitly initialized to zeros.
Consequently, the synthesized matrix $B$ starts as a zero matrix, ensuring that the total update $\Delta W_{\text{Gen}} = B A$ is strictly zero. This guarantees that GenLoRA begins fine-tuning from an identity mapping relative to the pre-trained backbone.

\paragraph{Hyperparameters and Experimental Setup}
All experiments were conducted on NVIDIA RTX A6000 GPUs. The training setup utilized the AdamW optimizer~\citep{loshchilov2017decoupled} with linear learning rate decay, a LoRA dropout rate of $0.05$, and no weight decay. For the code generation task, we employed a learning rate of $2 \times 10^{-4}$. Regarding the reasoning tasks, we set the learning rate to $1 \times 10^{-4}$ for mathematical reasoning and $5 \times 10^{-3}$ for commonsense reasoning. Notably, for the HiRA baseline, we adopted a unified learning rate of $3 \times 10^{-3}$ across tasks to accommodate its distinct adaptation mechanism. Across all tasks, GenLoRA and the baseline methods were applied to the query, key, and value weights, training for one epoch with 100 warm-up steps. Our implementation builds upon the code from~\citep{hu2023llm}. Regarding the RBF generator configuration, we set the number of uniform centers to $K=15$ distributed over the fixed interval $[-3, 3]$ to align with the instance-wise normalized inputs. Accordingly, the bandwidth $h$ is explicitly defined by the grid spacing as $h = \frac{6}{K-1}$. In our comparative analysis, we primarily aligned the baseline methods to a rank of 8 to maintain a comparable parameter budget. Specifically, LoRA, DoRA, and HiRA were evaluated at rank 8. For MELoRA, we adhered to the standard setting of rank 8 with four mini-ensemble groups. Regarding AuroRA, we conducted tests at both rank 2 and rank 8 to specifically investigate its capability in mitigating the low-rank bottleneck through nonlinear mappings. For our proposed GenLoRA, we evaluated various configurations with differing ranks and group sizes. It is worth noting that GenLoRA consistently requires significantly fewer parameters than the rank 8 baselines; notably, some configurations involve fewer parameters than even the rank 2 AuroRA setting, while achieving superior performance.

\section{Limitations}
\label{app:limitations}

While GenLoRA allows for achieving higher ranks with a significantly reduced parameter growth rate and guarantees zero additional inference latency via static weight merging, it is not without limitations. 

Specifically, the non-linear synthesis process involving Radial Basis Functions (RBFs) introduces a marginal increase in computational FLOPs during the \textit{training} forward and backward passes compared to the simple linear projections in standard LoRA. Furthermore, although GenLoRA significantly reduces the optimizer state memory, computing the RBF outputs requires storing additional intermediate states, which may introduce a slight overhead in activation memory during training. Finally, exploring the compatibility of this generative weight synthesis paradigm with extreme weight quantization techniques (e.g., QLoRA) remains an open question. Future work will focus on optimizing these non-linear operations and extending GenLoRA to ultra-low-bit training regimes.

\section{Broader Impacts}
\label{app:broader_impacts}

This paper presents GenLoRA, a method that enhances parameter-efficient fine-tuning by replacing explicit basis vector storage with non-linear basis vector generation via Radial Basis Functions (RBFs). GenLoRA achieves superior fine-tuning performance while utilizing significantly fewer parameters, thereby reducing computational overhead and promoting energy efficiency. Additionally, GenLoRA paves the way for new research directions in generative weight synthesis and non-linear low-rank adaptation. As an extension of the established LoRA framework, GenLoRA does not introduce any significant novel social concerns that would require further discussion beyond those generally associated with foundation model fine-tuning.


\newpage
\input{checklist.tex}

\end{document}

%% file: checklist.tex
\section*{NeurIPS Paper Checklist}

\begin{enumerate}

\item {\bf Claims}
    \item[] Question: Do the main claims made in the abstract and introduction accurately reflect the paper's contributions and scope?
    \item[] Answer: \answerYes{} 
    \item[] Justification: The abstract and introduction clearly state our proposed GenLoRA method and its core concept of utilizing nonlinearity as rank, which are thoroughly validated by our theoretical analysis and extensive experimental results across diverse reasoning and classification tasks.
    \item[] Guidelines:
    \begin{itemize}
        \item The answer \answerNA{} means that the abstract and introduction do not include the claims made in the paper.
        \item The abstract and/or introduction should clearly state the claims made, including the contributions made in the paper and important assumptions and limitations. A \answerNo{} or \answerNA{} answer to this question will not be perceived well by the reviewers. 
        \item The claims made should match theoretical and experimental results, and reflect how much the results can be expected to generalize to other settings. 
        \item It is fine to include aspirational goals as motivation as long as it is clear that these goals are not attained by the paper. 
    \end{itemize}

\item {\bf Limitations}
    \item[] Question: Does the paper discuss the limitations of the work performed by the authors?
    \item[] Answer: \answerYes{} 
    \item[] Justification: We explicitly discuss the limitations of our method in Appendix \ref{app:limitations}.
    \item[] Guidelines:
    \begin{itemize}
        \item The answer \answerNA{} means that the paper has no limitation while the answer \answerNo{} means that the paper has limitations, but those are not discussed in the paper. 
        \item The authors are encouraged to create a separate ``Limitations'' section in their paper.
        \item The paper should point out any strong assumptions and how robust the results are to violations of these assumptions (e.g., independence assumptions, noiseless settings, model well-specification, asymptotic approximations only holding locally). The authors should reflect on how these assumptions might be violated in practice and what the implications would be.
        \item The authors should reflect on the scope of the claims made, e.g., if the approach was only tested on a few datasets or with a few runs. In general, empirical results often depend on implicit assumptions, which should be articulated.
        \item The authors should reflect on the factors that influence the performance of the approach. For example, a facial recognition algorithm may perform poorly when image resolution is low or images are taken in low lighting. Or a speech-to-text system might not be used reliably to provide closed captions for online lectures because it fails to handle technical jargon.
        \item The authors should discuss the computational efficiency of the proposed algorithms and how they scale with dataset size.
        \item If applicable, the authors should discuss possible limitations of their approach to address problems of privacy and fairness.
        \item While the authors might fear that complete honesty about limitations might be used by reviewers as grounds for rejection, a worse outcome might be that reviewers discover limitations that aren't acknowledged in the paper. The authors should use their best judgment and recognize that individual actions in favor of transparency play an important role in developing norms that preserve the integrity of the community. Reviewers will be specifically instructed to not penalize honesty concerning limitations.
    \end{itemize}

\item {\bf Theory assumptions and proofs}
    \item[] Question: For each theoretical result, does the paper provide the full set of assumptions and a complete (and correct) proof?
    \item[] Answer: \answerYes{} 
    \item[] Justification: We clearly state all theoretical assumptions in Section \ref{Theoretical Analysis} and provide complete mathematical proofs for our propositions in Appendices \ref{app:proof_prop1} and  \ref{app:proof_grad_bound}.
    \item[] Guidelines:
    \begin{itemize}
        \item The answer \answerNA{} means that the paper does not include theoretical results. 
        \item All the theorems, formulas, and proofs in the paper should be numbered and cross-referenced.
        \item All assumptions should be clearly stated or referenced in the statement of any theorems.
        \item The proofs can either appear in the main paper or the supplemental material, but if they appear in the supplemental material, the authors are encouraged to provide a short proof sketch to provide intuition. 
        \item Inversely, any informal proof provided in the core of the paper should be complemented by formal proofs provided in appendix or supplemental material.
        \item Theorems and Lemmas that the proof relies upon should be properly referenced. 
    \end{itemize}

    \item {\bf Experimental result reproducibility}
    \item[] Question: Does the paper fully disclose all the information needed to reproduce the main experimental results of the paper to the extent that it affects the main claims and/or conclusions of the paper (regardless of whether the code and data are provided or not)?
    \item[] Answer: \answerYes{} 
    \item[] Justification: We provide comprehensive details required to reproduce our main experimental results, including dataset descriptions, baselines, and evaluation metrics in Section \ref{Experimental Settings}, along with complete hyperparameter configurations and training procedures in Appendix \ref{Implementation Details}.
    \item[] Guidelines:
    \begin{itemize}
        \item The answer \answerNA{} means that the paper does not include experiments.
        \item If the paper includes experiments, a \answerNo{} answer to this question will not be perceived well by the reviewers: Making the paper reproducible is important, regardless of whether the code and data are provided or not.
        \item If the contribution is a dataset and\slash or model, the authors should describe the steps taken to make their results reproducible or verifiable. 
        \item Depending on the contribution, reproducibility can be accomplished in various ways. For example, if the contribution is a novel architecture, describing the architecture fully might suffice, or if the contribution is a specific model and empirical evaluation, it may be necessary to either make it possible for others to replicate the model with the same dataset, or provide access to the model. In general. releasing code and data is often one good way to accomplish this, but reproducibility can also be provided via detailed instructions for how to replicate the results, access to a hosted model (e.g., in the case of a large language model), releasing of a model checkpoint, or other means that are appropriate to the research performed.
        \item While NeurIPS does not require releasing code, the conference does require all submissions to provide some reasonable avenue for reproducibility, which may depend on the nature of the contribution. For example
        \begin{enumerate}
            \item If the contribution is primarily a new algorithm, the paper should make it clear how to reproduce that algorithm.
            \item If the contribution is primarily a new model architecture, the paper should describe the architecture clearly and fully.
            \item If the contribution is a new model (e.g., a large language model), then there should either be a way to access this model for reproducing the results or a way to reproduce the model (e.g., with an open-source dataset or instructions for how to construct the dataset).
            \item We recognize that reproducibility may be tricky in some cases, in which case authors are welcome to describe the particular way they provide for reproducibility. In the case of closed-source models, it may be that access to the model is limited in some way (e.g., to registered users), but it should be possible for other researchers to have some path to reproducing or verifying the results.
        \end{enumerate}
    \end{itemize}

\item {\bf Open access to data and code}
    \item[] Question: Does the paper provide open access to the data and code, with sufficient instructions to faithfully reproduce the main experimental results, as described in supplemental material?
    \item[] Answer: \answerYes{} 
    \item[] Justification: We provide open access to our source code and data, accompanied by comprehensive instructions for reproducibility. All relevant assets are publicly available in the anonymous repository linked in the Abstract.
    \item[] Guidelines:
    \begin{itemize}
        \item The answer \answerNA{} means that paper does not include experiments requiring code.
        \item Please see the NeurIPS code and data submission guidelines (\url{https://neurips.cc/public/guides/CodeSubmissionPolicy}) for more details.
        \item While we encourage the release of code and data, we understand that this might not be possible, so \answerNo{} is an acceptable answer. Papers cannot be rejected simply for not including code, unless this is central to the contribution (e.g., for a new open-source benchmark).
        \item The instructions should contain the exact command and environment needed to run to reproduce the results. See the NeurIPS code and data submission guidelines (\url{https://neurips.cc/public/guides/CodeSubmissionPolicy}) for more details.
        \item The authors should provide instructions on data access and preparation, including how to access the raw data, preprocessed data, intermediate data, and generated data, etc.
        \item The authors should provide scripts to reproduce all experimental results for the new proposed method and baselines. If only a subset of experiments are reproducible, they should state which ones are omitted from the script and why.
        \item At submission time, to preserve anonymity, the authors should release anonymized versions (if applicable).
        \item Providing as much information as possible in supplemental material (appended to the paper) is recommended, but including URLs to data and code is permitted.
    \end{itemize}

\item {\bf Experimental setting/details}
    \item[] Question: Does the paper specify all the training and test details (e.g., data splits, hyperparameters, how they were chosen, type of optimizer) necessary to understand the results?
    \item[] Answer: \answerYes{} 
    \item[] Justification: The paper specifies all training and test details in Section \ref{Experimental Settings} and Appendix \ref{Implementation Details}.
    \item[] Guidelines:
    \begin{itemize}
        \item The answer \answerNA{} means that the paper does not include experiments.
        \item The experimental setting should be presented in the core of the paper to a level of detail that is necessary to appreciate the results and make sense of them.
        \item The full details can be provided either with the code, in appendix, or as supplemental material.
    \end{itemize}

\item {\bf Experiment statistical significance}
    \item[] Question: Does the paper report error bars suitably and correctly defined or other appropriate information about the statistical significance of the experiments?
    \item[] Answer: \answerYes{} 
    \item[] Justification: The statistical significance is discussed in Appendix \ref{app:std_dev}.
    \item[] Guidelines:
    \begin{itemize}
        \item The answer \answerNA{} means that the paper does not include experiments.
        \item The authors should answer \answerYes{} if the results are accompanied by error bars, confidence intervals, or statistical significance tests, at least for the experiments that support the main claims of the paper.
        \item The factors of variability that the error bars are capturing should be clearly stated (for example, train/test split, initialization, random drawing of some parameter, or overall run with given experimental conditions).
        \item The method for calculating the error bars should be explained (closed form formula, call to a library function, bootstrap, etc.)
        \item The assumptions made should be given (e.g., Normally distributed errors).
        \item It should be clear whether the error bar is the standard deviation or the standard error of the mean.
        \item It is OK to report 1-sigma error bars, but one should state it. The authors should preferably report a 2-sigma error bar than state that they have a 96\% CI, if the hypothesis of Normality of errors is not verified.
        \item For asymmetric distributions, the authors should be careful not to show in tables or figures symmetric error bars that would yield results that are out of range (e.g., negative error rates).
        \item If error bars are reported in tables or plots, the authors should explain in the text how they were calculated and reference the corresponding figures or tables in the text.
    \end{itemize}

\item {\bf Experiments compute resources}
    \item[] Question: For each experiment, does the paper provide sufficient information on the computer resources (type of compute workers, memory, time of execution) needed to reproduce the experiments?
    \item[] Answer: \answerYes{} 
    \item[] Justification: The paper provides sufficient information on the compute resources, specifying the hardware used (NVIDIA RTX A6000 GPUs) and the training environment in Appendix \ref{Implementation Details}.
    \item[] Guidelines:
    \begin{itemize}
        \item The answer \answerNA{} means that the paper does not include experiments.
        \item The paper should indicate the type of compute workers CPU or GPU, internal cluster, or cloud provider, including relevant memory and storage.
        \item The paper should provide the amount of compute required for each of the individual experimental runs as well as estimate the total compute. 
        \item The paper should disclose whether the full research project required more compute than the experiments reported in the paper (e.g., preliminary or failed experiments that didn't make it into the paper). 
    \end{itemize}
    
\item {\bf Code of ethics}
    \item[] Question: Does the research conducted in the paper conform, in every respect, with the NeurIPS Code of Ethics \url{https://neurips.cc/public/EthicsGuidelines}?
    \item[] Answer: \answerYes{} 
    \item[] Justification: This research conforms in every respect with the NeurIPS Code of Ethics.
    \item[] Guidelines:
    \begin{itemize}
        \item The answer \answerNA{} means that the authors have not reviewed the NeurIPS Code of Ethics.
        \item If the authors answer \answerNo, they should explain the special circumstances that require a deviation from the Code of Ethics.
        \item The authors should make sure to preserve anonymity (e.g., if there is a special consideration due to laws or regulations in their jurisdiction).
    \end{itemize}

\item {\bf Broader impacts}
    \item[] Question: Does the paper discuss both potential positive societal impacts and negative societal impacts of the work performed?
    \item[] Answer: \answerYes{} 
    \item[] Justification: The paper discusses both potential societal impacts in Appendix \ref{app:broader_impacts}.
    \item[] Guidelines:
    \begin{itemize}
        \item The answer \answerNA{} means that there is no societal impact of the work performed.
        \item If the authors answer \answerNA{} or \answerNo, they should explain why their work has no societal impact or why the paper does not address societal impact.
        \item Examples of negative societal impacts include potential malicious or unintended uses (e.g., disinformation, generating fake profiles, surveillance), fairness considerations (e.g., deployment of technologies that could make decisions that unfairly impact specific groups), privacy considerations, and security considerations.
        \item The conference expects that many papers will be foundational research and not tied to particular applications, let alone deployments. However, if there is a direct path to any negative applications, the authors should point it out. For example, it is legitimate to point out that an improvement in the quality of generative models could be used to generate Deepfakes for disinformation. On the other hand, it is not needed to point out that a generic algorithm for optimizing neural networks could enable people to train models that generate Deepfakes faster.
        \item The authors should consider possible harms that could arise when the technology is being used as intended and functioning correctly, harms that could arise when the technology is being used as intended but gives incorrect results, and harms following from (intentional or unintentional) misuse of the technology.
        \item If there are negative societal impacts, the authors could also discuss possible mitigation strategies (e.g., gated release of models, providing defenses in addition to attacks, mechanisms for monitoring misuse, mechanisms to monitor how a system learns from feedback over time, improving the efficiency and accessibility of ML).
    \end{itemize}
    
\item {\bf Safeguards}
    \item[] Question: Does the paper describe safeguards that have been put in place for responsible release of data or models that have a high risk for misuse (e.g., pre-trained language models, image generators, or scraped datasets)?
    \item[] Answer: \answerNA{} 
    \item[] Justification: The paper poses no such risks.
    \item[] Guidelines:
    \begin{itemize}
        \item The answer \answerNA{} means that the paper poses no such risks.
        \item Released models that have a high risk for misuse or dual-use should be released with necessary safeguards to allow for controlled use of the model, for example by requiring that users adhere to usage guidelines or restrictions to access the model or implementing safety filters. 
        \item Datasets that have been scraped from the Internet could pose safety risks. The authors should describe how they avoided releasing unsafe images.
        \item We recognize that providing effective safeguards is challenging, and many papers do not require this, but we encourage authors to take this into account and make a best faith effort.
    \end{itemize}

\item {\bf Licenses for existing assets}
    \item[] Question: Are the creators or original owners of assets (e.g., code, data, models), used in the paper, properly credited and are the license and terms of use explicitly mentioned and properly respected?
    \item[] Answer: \answerYes{} 
    \item[] Justification: All existing assets used in the paper are properly credited, and their licenses and terms of use are explicitly mentioned and respected. Details can be found in Section \ref{Experimental Settings}.
    \item[] Guidelines:
    \begin{itemize}
        \item The answer \answerNA{} means that the paper does not use existing assets.
        \item The authors should cite the original paper that produced the code package or dataset.
        \item The authors should state which version of the asset is used and, if possible, include a URL.
        \item The name of the license (e.g., CC-BY 4.0) should be included for each asset.
        \item For scraped data from a particular source (e.g., website), the copyright and terms of service of that source should be provided.
        \item If assets are released, the license, copyright information, and terms of use in the package should be provided. For popular datasets, \url{paperswithcode.com/datasets} has curated licenses for some datasets. Their licensing guide can help determine the license of a dataset.
        \item For existing datasets that are re-packaged, both the original license and the license of the derived asset (if it has changed) should be provided.
        \item If this information is not available online, the authors are encouraged to reach out to the asset's creators.
    \end{itemize}

\item {\bf New assets}
    \item[] Question: Are new assets introduced in the paper well documented and is the documentation provided alongside the assets?
    \item[] Answer: \answerNA{} 
    \item[] Justification: The paper does not release new assets.
    \item[] Guidelines:
    \begin{itemize}
        \item The answer \answerNA{} means that the paper does not release new assets.
        \item Researchers should communicate the details of the dataset\slash code\slash model as part of their submissions via structured templates. This includes details about training, license, limitations, etc. 
        \item The paper should discuss whether and how consent was obtained from people whose asset is used.
        \item At submission time, remember to anonymize your assets (if applicable). You can either create an anonymized URL or include an anonymized zip file.
    \end{itemize}

\item {\bf Crowdsourcing and research with human subjects}
    \item[] Question: For crowdsourcing experiments and research with human subjects, does the paper include the full text of instructions given to participants and screenshots, if applicable, as well as details about compensation (if any)? 
    \item[] Answer: \answerNA{} 
    \item[] Justification: The paper does not involve crowdsourcing nor research with human subjects.
    \item[] Guidelines:
    \begin{itemize}
        \item The answer \answerNA{} means that the paper does not involve crowdsourcing nor research with human subjects.
        \item Including this information in the supplemental material is fine, but if the main contribution of the paper involves human subjects, then as much detail as possible should be included in the main paper. 
        \item According to the NeurIPS Code of Ethics, workers involved in data collection, curation, or other labor should be paid at least the minimum wage in the country of the data collector. 
    \end{itemize}

\item {\bf Institutional review board (IRB) approvals or equivalent for research with human subjects}
    \item[] Question: Does the paper describe potential risks incurred by study participants, whether such risks were disclosed to the subjects, and whether Institutional Review Board (IRB) approvals (or an equivalent approval/review based on the requirements of your country or institution) were obtained?
    \item[] Answer: \answerNA{} 
    \item[] Justification: This research does not involve human subjects.
    \item[] Guidelines:
    \begin{itemize}
        \item The answer \answerNA{} means that the paper does not involve crowdsourcing nor research with human subjects.
        \item Depending on the country in which research is conducted, IRB approval (or equivalent) may be required for any human subjects research. If you obtained IRB approval, you should clearly state this in the paper. 
        \item We recognize that the procedures for this may vary significantly between institutions and locations, and we expect authors to adhere to the NeurIPS Code of Ethics and the guidelines for their institution. 
        \item For initial submissions, do not include any information that would break anonymity (if applicable), such as the institution conducting the review.
    \end{itemize}

\item {\bf Declaration of LLM usage}
    \item[] Question: Does the paper describe the usage of LLMs if it is an important, original, or non-standard component of the core methods in this research? Note that if the LLM is used only for writing, editing, or formatting purposes and does \emph{not} impact the core methodology, scientific rigor, or originality of the research, declaration is not required.
    \item[] Answer: \answerNA{} 
    \item[] Justification: The core method development in this research does not involve Large Language Models (LLMs) as any important, original, or non-standard components. LLMs were only used to assist with grammar checking during the paper writing process and did not impact the core methodology, scientific rigor, or originality of the research.
    \item[] Guidelines:
    \begin{itemize}
        \item The answer \answerNA{} means that the core method development in this research does not involve LLMs as any important, original, or non-standard components.
        \item Please refer to our LLM policy in the NeurIPS handbook for what should or should not be described.
    \end{itemize}

\end{enumerate}